\newenvironment{proof}
    {\noindent{\em Proof.}
    }
    {
\raggedright{$\square$}
\vspace{3pt}
    }
\long\def\ignore#1{}
\def\calF{{\cal F}}
\def\calL{{\cal L}}
\def\calR{{\cal R}}
\def\calS{{\cal S}}
\def\calX{{\cal X}}
\def\IN#1{{I_{#1}}}
\def\OUT#1{{O_{#1}}}
\def\INfw#1{{I^+_{#1}}}
\def\INbw#1{{I^-_{#1}}}
\def\OUTfw#1{{O^+_{#1}}}
\def\OUTbw#1{{O^-_{#1}}}
\def\omegafw{\omega^+}
\def\omegabw{\omega^-}
\def\INp#1{{I'_{#1}}}
\def\OUTp#1{{O'_{#1}}}
\newcommand{\bx}{\mbox{\boldmath $x$}}
\newcommand{\bmu}{\mbox{ $\mu$}}
\newcommand{\bxS}{\mbox{\footnotesize\boldmath $x$}}
\newcommand{\bmuS}{\mbox{\footnotesize $\mu$}}
\def\myparagraph#1{\vspace{0pt}\noindent{\bf #1~~}}
\newtheorem{theorem}{Theorem}
\newtheorem{lemma}[theorem]{Lemma}
\newtheorem{definition}[theorem]{Definition}
\newtheorem{proposition}[theorem]{Proposition}
\begin{document}

\title{A new look at reweighted message passing}
\author{Vladimir~Kolmogorov \\ \tt{vnk@ist.ac.at}
}
\date{}
\maketitle

\begin{figure}
\vspace{-150pt}
\center
\begin{eqnarray*}
\mbox{~~~~~~~~~~~~Copyright IEEE. Accepted to {\em Transactions on Pattern Analysis and Machine Intelligence} (TPAMI)} \\
\mbox{\url{http://ieeexplore.ieee.org/xpl/articleDetails.jsp?arnumber=6926846}~~~~~~~~~~~~~}
\end{eqnarray*}
\vspace{80pt}
\end{figure}

\begin{abstract}
%\boldmath
We propose a new family of message passing techniques for MAP estimation in graphical models
which we call {\em Sequential Reweighted Message Passing} (SRMP).
Special cases include well-known techniques such as {\em Min-Sum Diffusion} (MSD)
and a faster {\em Sequential Tree-Reweighted Message Passing} (TRW-S).
Importantly, our derivation is simpler than the original derivation of TRW-S,
and does not involve a decomposition into trees.
This allows easy generalizations. 
The new family of algorithms can be viewed as a generalization of TRW-S from pairwise to higher-order graphical models.
We test SRMP on several real-world problems with promising results.
%We present such a generalization for the case of higher-order graphical models, and test it on several real-world problems with promising results.
\end{abstract}
% IEEEtran.cls defaults to using nonbold math in the Abstract.
% This preserves the distinction between vectors and scalars. However,
% if the journal you are submitting to favors bold math in the abstract,
% then you can use LaTeX's standard command \boldmath at the very start
% of the abstract to achieve this. Many IEEE journals frown on math
% in the abstract anyway. In particular, the Computer Society does
% not want either math or citations to appear in the abstract.

% Note that keywords are not normally used for peer review papers.
%\begin{keywords}
%graphical models, MAP estimation, message passing algorithms 
%\end{keywords}

% make the title area

\section{Introduction}
This paper is devoted to the problem of minimizing a function of discrete variables
represented as a sum of {\em factors}, where a factor is a term depending on
a certain subset of variables. The problem is also known as MAP-MRF inference
in a graphical model. Due to the generality of the definition, it has applications in 
many  areas. 
Probably, the most well-studied case is when each factor depends on
at most two variables ({\em pairwise MRFs}). Many inference algorithms have been proposed.
A prominent approach is to try to solve a natural linear programming (LP) relaxation
of the problem, sometimes called {\em Schlesinger LP}~\cite{Werner:PAMI07}.
%General-purpose LP solvers are too slow for large practical instances~\cite{Yanover:JMLR06},
A lot of research went into developing efficient solvers for this special LP, as detailed below.
% some example are~\cite{
%Wainwright:trw_max, 
%Kolmogorov:TRWS:PAMI06, 
%Werner:PAMI07,
%Komodakis:ICCV07,
%Globerson:NIPS07,
%Jojic:ICML10,
%Ravikumar:JMLR10,
%SonGloJaa_optbook,
%Batra:AISTATS2011,Tarlow:ICML11,
%Martins:ICML11,
%Meshi:ECML11,
%Savchynskyy:UAI12}. 

One of the proposed  techniques is {\em Min-Sum Diffusion} (MSD)~\cite{Werner:PAMI07}.
It has a very short derivation, but the price for this simplicity is efficiency:
MSD can be significantly slower than more advanced techniques such as {\em Sequential Tree-Reweighted Message Passing} ($\mbox{TRW-S}$)~\cite{Kolmogorov:TRWS:PAMI06}.
The derivation of TRW-S in \cite{Kolmogorov:TRWS:PAMI06} uses additionally a decomposition of the graph into trees (as in~\cite{Wainwright:trw_max}),
namely into {\em monotonic chains}. This makes generalizing TRW-S to other cases harder (compared to MSD).

We consider a simple modification of MSD which we call {\em Anisotropic MSD};
it is equivalent to a special case of the {\em Convex Max-Product} (CMP) algorithm~\cite{Hazan:10}.
We then show that with a particular choice of weights and the order of processing nodes Anisotropic MSD
becomes equivalent to TRW-S (in the case of pairwise graphical models). This gives an alternative derivation of TRW-S that does involve a decomposition into chains,
and allows an almost immediate generalization of TRW-S to higher-order graphical models.

Note that generalized TRW-S has been recently presented in~\cite{GTRWS:arXiv12}.
However, we argue that their generalization is  more complicated:
it introduces more notation and definitions related to a decomposition of the graphical model into {\em monotonic junction chains},
imposes weak assumptions on the graph $(\calF,J)$ (this graph is defined in the next section), uses some restriction on the order of processing factors,
and proposes a special treatment of nested factors to improve efficiency.
All this makes generalized $\mbox{TRW-S}$ in~\cite{GTRWS:arXiv12} more difficult to understand and implement.

We believe that our new derivation may have benefits even for pairwise graphical models.
The family of SRMP algorithms is more flexible compared to $\mbox{TRW-S}$;
as discussed in the conclusions, this may prove to be useful in certain scenarios.

\myparagraph{Related work} Besides~\cite{GTRWS:arXiv12}, the closest related works
are probably \cite{Werner:PAMI10} and \cite{Hazan:10}. The first one presented a generalization of pairwise MSD to higher-order graphical models,
and also described a family of LP relaxations specified by a set of pairs of nested factors
for which the marginalization constraint needs to be enforced. We use this framework in our paper.
The work \cite{Hazan:10} presented a family of {\em Convex Message Passing} algorithms, which we  use as one of our building blocks.

Another popular message passing algorithm is {\em MPLP}~\cite{Globerson:NIPS07,Sontag:thesis10,SonGloJaa_optbook}.
Like MSD, it has a simple formulation (we 
%\newpage
give it in section~\ref{sec:BCascent} alongside with MSD). %we write down MPLP update equations for general graphical models).
However, our tests indicate that MPLP can be significantly slower than SRMP.

Algorithms discusses so far perform a block-coordinate ascent on the objective function
(and may get stuck in a suboptimal point~\cite{Kolmogorov:TRWS:PAMI06,Werner:PAMI07}).
Many other techniques with similar properties have been proposed, e.g.\ \cite{Kumar:ICML08,SonJaa_aistats09,Meltzer:UAI09,Hazan:10,Zheng:CVPR12,WangKoller:ICML13}.

%\ifTR
A lot of research also went into developing algorithms that are guaranteed to converge to an optimal solution of the LP.
Examples include subgradient techniques \cite{Storvik:TIP00,Schlesinger:07,Komodakis:ECCV08,Komodakis:CVPR09},
smoothing the objective with a temperature parameter that gradually goes to zero~\cite{Johnson:07},
proximal projections \cite{Ravikumar:JMLR10}, 
Nesterov schemes \cite{Jojic:ICML10,Savchynskyy:CVPR11},
an augmented Lagrangian method~\cite{Martins:ICML11,Meshi:ECML11},
a proximal gradient method~\cite{Schmidt:EMMCVPR11} (formulated for the general LP in~\cite{Pock-et-al-iccv09}),
a bundle method~\cite{Kappes:CVPR12},
a mirror descent method~\cite{Luong:12},
and a ``smoothed version of TRW-S''~\cite{Savchynskyy:UAI12}. 
%According to~\cite{Savchynskyy:UAI12}, the latter outperforms many other techniques on the stereo problem.
%
%\else
%A lot of research also went into developing algorithms that are guaranteed to converge to an optimal solution of the LP.
%Examples include subgradient ascent techniques \cite{Komodakis:ECCV08,Komodakis:CVPR09},
%proximal projections \cite{Ravikumar:JMLR10}, Nesterov schemes \cite{Jojic:ICML10,Savchynskyy:CVPR11},
%an augmented Lagrangian method~\cite{Martins:ICML11,Meshi:ECML11},
%a proximal gradient method~\cite{Schmidt:EMMCVPR11} (formulated for the general LP in~\cite{Werner:PAMI10}),
%and the ``smoothed version of TRW-S''~\cite{Savchynskyy:UAI12}. According
%to~\cite{Savchynskyy:UAI12}, the latter outperforms many other techniques on the stereo problem.
%We refer to the full version in the supplementary material for a few more references.
%\fi

%An alternative approach based on subgradient ascent was described
%in~\cite{Komodakis:ECCV08,Komodakis:CVPR09}.  Unlike the techniques above, it
%provably converges to an optimal solution.  However, the convergence rate may
%be sublinear, and a practical implementation often requires a careful
%selection of the step size rule.

%Our results in section~\ref{sec:experiments} indicate that TRW-S generally
%outperforms other popular techniques that we tested, 
%namely MSD, MPLP and a subgradient ascent.

%\nocite{Wainwright:trw_max,Kolmogorov:TRWS:PAMI06,Werner:PAMI07,Werner:CVPR08,Sontag:UAI2008,Batra:AISTATS2011,SonGloJaa_optbook,Wiegerinck:AISTATS05,Kumar:ICML08,Meltzer:UAI09}

%\nocite{Potetz-Lee-08,Tarlow:AISTATS10}

%\vspace{-9pt}
\section{Background and notation}\label{sec:background}
%\vspace{-4pt}
We closely follow the notation of Werner~\cite{Werner:PAMI10}.
Let $V$ be the set of nodes. For node $v\in V$ let $\calX_v$ be
the finite set of possible labels for $v$. For a subset $\alpha\subseteq V$
let $\calX_\alpha=\otimes_{v\in \alpha}\calX_v$ be the set of labelings of $\alpha$,
and let $\calX=\calX_V$ be the set of labelings of $V$.
% and $\calX=\otimes_{v\in V} \calX_v$
%be the set of labelings of $V$. 
Our goal is to minimize the function
\begin{equation}
f(\bx\:|\:\bar\theta)=\sum_{\alpha\in\calF} \bar\theta_\alpha(\bx_\alpha) \, , \quad \bx\in\calX \label{eq:E} 
\end{equation}
where $\calF\subset 2^V$ is a set of non-empty subsets of $V$ (also called {\em factors}),
$\bx_\alpha$ is the restriction of $\bx$ to $\alpha\subseteq V$,
and $\bar\theta$ is a vector with components $(\bar\theta_\alpha(\bx_\alpha)\:|\:\alpha\in\calF,\bx_\alpha\in\calX_\alpha)$.
%Note that $(V,\calF)$ is a {\em hyper-graph}. If $V\subseteq \calF$ and $|A|\le 2$ for all $A\in \calF$
%that it reduces to the usual graph.

Let $J$ be a fixed set of pairs of the form $(\alpha,\beta)$ where $\alpha,\beta\in\calF$ and $\beta\subset \alpha$.
Note that $(\calF,J)$ is a directed acyclic graph.
We will be interested in solving the following relaxation of the problem:
\begin{equation}
\min_{\bmuS\in\calL(J)} \;\; \sum_{\alpha\in\calF}\sum_{\bxS_\alpha} \bar\theta_\alpha(\bx_\alpha)\mu_\alpha(\bx_\alpha) 
\label{eq:LPprimal}
\end{equation}
where $\mu_\alpha(\bx_\alpha)\in\mathbb R$ are the variables  and  $\calL(J)$ is the $J$-based {\em local polytope} of $(V,\calF)$:
\begin{equation}
\hspace{1pt}\calL(J)\!=\!\left\{\! \bmu\ge 0 
\begin{picture}(1,0)
  \put(3,-28){\line(0,1){60}}
\end{picture}
 \mbox{\begin{tabular}{l} 
$\;\,\mathlarger\sum\limits_{\bxS_\alpha}\;\mu_\alpha(\bx_\alpha)=1$ \hspace{7pt} $\forall \alpha\in\calF, \bx_\alpha\!\!\!$ \\
$\mathlarger\sum\limits_{\bxS_\alpha:\bxS_{\alpha}\sim\bxS_\beta}\!\!\!\!\!\!\mu_\alpha(\bx_\alpha) =\mu_\beta(\bx_\beta)$ \\ \hspace{66pt}$\forall(\alpha,\beta)\in J,\bx_\beta\!\!\!$ 
\end{tabular}} \right\}\hspace{-5pt}
\label{eq:LocalPolytope}
\end{equation}
%
%\begin{equation}
%\hspace{1pt}\calL(J)=\left\{ \bmu\ge 0 \;
%\begin{picture}(1,0)
%  \put(3,-20){\line(0,1){44}}
%\end{picture}
%\; \mbox{\begin{tabular}{l} 
%\;\;\; $\;\mathlarger\sum\limits_{\bx_\alpha}\hspace{14pt}\mu_\alpha(\bx_\alpha)=1$ \hspace{44pt} $\forall \alpha\in\calF\!\!\!$ \\
%$\mathlarger\sum\limits_{\bx_{\alpha}:\bx_\alpha\sim\bx_\beta}\mu_\alpha(\bx_\alpha) =\mu_\beta(\bx_\beta)$  \hspace{19pt}$\forall(\alpha,\beta)\in J,\bx_\beta\!\!\!$ 
%\end{tabular}} \right\}\hspace{-5pt}
%\label{eq:LocalPolytope}
%\end{equation}
We use the following implicit restriction convention: for $\beta\subseteq \alpha$, whenever symbols $\bx_\alpha$ and $\bx_\beta$ appear in
a single expression they do not denote independent joint
states but $\bx_\beta$ denotes the restriction of $\bx_\alpha$ to nodes in $\beta$.
Sometimes we will emphasize this fact by writing $\bx_\alpha\sim\bx_\beta$, as in the eq.~\eqref{eq:LocalPolytope}. %uation above.

An important case that is frequently used is when $|\alpha|\le 2$  for all $\alpha\in\calF$
(a {\em pairwise} graphical model). We will always assume that in this case 
 $J=\{(\{i,j\},\{i\}),(\{i,j\},\{j\})\:|\:\{i,j\}\in\calF\}$.

When there are higher-order factors, one could define $J=\{(\alpha,\{i\})\:|\:i\in\alpha\in\calF,|\alpha|\ge 2\}$. Graph $(\calF,J)$ is then known as a {\em factor graph},
and the resulting relaxation is sometimes called the {\em Basic LP relaxation} (BLP).
It is known that this relaxation is tight if each term $\bar\theta_A$ is a submodular function~\cite{Werner:PAMI10}.
A larger classes of functions that can be solved with BLP has been recently identified in~\cite{ThapperZivny:FOCS12,Kolmogorov:ICALP13},
who in fact completely characterized classes of {\em  Valued Constraint Satisfaction Problems} 
for which the BLP relaxation is always tight.

For many practical problems, however, the BLP relaxation is not tight; then we can  add extra edges to $J$ to tighten the relaxation.
\ignore{
It is worth mentioning that more edges do not always give tighter relaxation; a simple counterexample is given by following proposition.
\begin{proposition}\cite{GTRWS:arXiv12}\label{prop:closure}
The following two operations do not affect the set $\calL(J)$, and thus
relaxation~\eqref{eq:LPprimal}:
(i) Pick edges $(\alpha,\beta),(\beta,\gamma)\in J$, add $(\alpha,\gamma)$ to $J$. 
(ii) Pick edges $(\alpha,\beta),(\alpha,\gamma)\in J$ with $\gamma\subset\beta$, add $(\beta,\gamma)$ to $J$.
\end{proposition}
Note, in general conditions $\alpha,\beta\in\calF$, $\beta\subseteq \alpha$ do not imply that $(\alpha,\beta)\in J$.
Requiring the latter would be unreasonable; if, for example, $|\alpha|,|\beta|\gg 1$ then
adding edge $(\alpha,\beta)$ to $J$ would lead to a relaxation which is computationally infeasible to solve.
}

\myparagraph{Reparameterization and dual problem} For each $(\alpha,\beta)\in J$ 
%let $m_{\alpha\beta}=(m_{\alpha\beta}(\bx_\beta))$ be a {\em message} from $\alpha$ to $\beta$.
let $m_{\alpha\beta}$ be a {\em message} from $\alpha$ to $\beta$; it's a vector with components $m_{\alpha\beta}(\bx_\beta)$ for $\bx_\beta\in\calX_\beta$.
Each message vector $m=(m_{\alpha\beta}\:|\:(\alpha,\beta)\in J)$ defines a new vector $\theta=\bar\theta[m]$ according to
\begin{equation}
\theta_\beta(\bx_\beta)=\bar\theta_\beta(\bx_\beta)+\!\!\!\!\!\!\sum_{(\alpha,\beta)\in\IN{\beta}}\!\!m_{\alpha\beta}(\bx_\beta)-\!\!\!\!\!\!\sum_{(\beta,\gamma)\in \OUT{\beta}}\!\!m_{\beta\gamma}(\bx_\gamma)
\label{eq:rep}
\end{equation}
where $\IN{\beta}$ and $\OUT{\beta}$ denote respectively the  set of incoming and outgoing edges for $\beta$:
\begin{equation}
\IN{\beta}=\{(\alpha,\beta)\in J\}\qquad \OUT{\beta}=\{(\beta,\gamma)\in J\}
%\label{eq:rep}
\end{equation}
It is easy to check that $\bar\theta$ and $\theta$ define the same objective function, i.e.\ $f(\bx\:|\:\bar\theta)=f(\bx\:|\:\theta)$
for all labelings $\bx\in\calX$. Vector $\theta$ that satisfies such condition is called  a {\em reparameterization} of $\bar\theta$  \cite{Wainwright:trw_max}.
%If $\theta=\bar\theta[m]$ for some vector $m$ then we will write this as $\theta\equiv\bar\theta$.

For each vector $\theta$  expression 
$
\Phi(\theta) = \sum\limits_{\alpha\in\calF} \min\limits_{\bxS_\alpha} \theta_\alpha(\bx_\alpha)
$
gives a lower bound on $\min_{\bxS} f(\bx\:|\:\theta)$.
For a vector of messages $m$ let us define $\Phi(m)=\Phi(\theta[m])$; as follows from above, it is a lower bound on energy~\eqref{eq:E}.
To obtain the tightest bound, we need to solve the following problem:
\begin{equation}
\max_m \Phi(m)
\end{equation}
It can be checked that this maximization problem is equivalent to the dual of~\eqref{eq:LPprimal} (see \cite{Werner:PAMI10}).

\section{Block-coordinate ascent}\label{sec:BCascent}

To maximize lower bound $\Phi(m)$, we will use a block-coordinate ascent strategy:
select a subset of edges $J'\subseteq J$ and maximize $\Phi(m)$
over messages $(m_{\alpha\beta}\:|\:(\alpha,\beta)\in\widetilde J)$ while keeping
all other messages fixed. It is not difficult to show that such restricted
maximization problem can be solved efficiently if graph $(\calF,J')$ is a tree (or a forest);
for pairwise graphical models this was shown in~\cite{SonJaa_aistats09}. In this paper we restrict
our attention to two special cases of star-shaped trees:
\begin{list}{}{\setlength{\leftmargin}{14pt}\setlength{\labelsep}{5pt}}
\item[I.~] Take $J'\!=\!\INp{\beta}\subseteq\IN{\beta}$ for a fixed factor $\beta\!\in\!\calF$, i.e.\ (a subset of) incoming edges to $\beta$.
\item[II.] Take $J'\!=\!\OUTp{\alpha}\subseteq\OUT{\alpha}$ for a fixed factor $\alpha\!\in\!\calF$, i.e.\ (a subset of) outgoing edges from $\alpha$.
\end{list}
We will mostly focus on the case when we take {\bf all} incoming or outgoing edges, but
for generality we also allow proper subsets. The two procedures described below
are  special cases of the {\em Convex Max-Product} (CMP) algorithm~\cite{Hazan:10} but formulated
in a different way: the presentation in \cite{Hazan:10} did not use the notion of a reparameterization.

\myparagraph{Case I: Anisotropic MSD} %\label{sec:AMSD:intro}
Consider factor $\beta\in\calF$ and a non-empty set of incoming edges $\INp\beta\subseteq\IN{\beta}$.
A simple algorithm for maximizing $\Phi(m)$ over messages in $\INp\beta$
is {\em Min-Sum Diffusion} (MSD). % and the closely related {\em Convex Max-Product}~(CMP)~\cite{Hazan:10}. 
For pairwise models MSD was discovered by Kovalevsky and Koval in the 70's 
and independently by Flach in the 90's (see~\cite{Werner:PAMI07}).
Werner~\cite{Werner:PAMI10} then generalized it to higher-order relaxations.
%As discussed below, MSD is closely related to the {\em Convex Max-Product}~(CMP) algorithm~\cite{Hazan:10,Meltzer:UAI09}.

\begin{figure*}[!t]
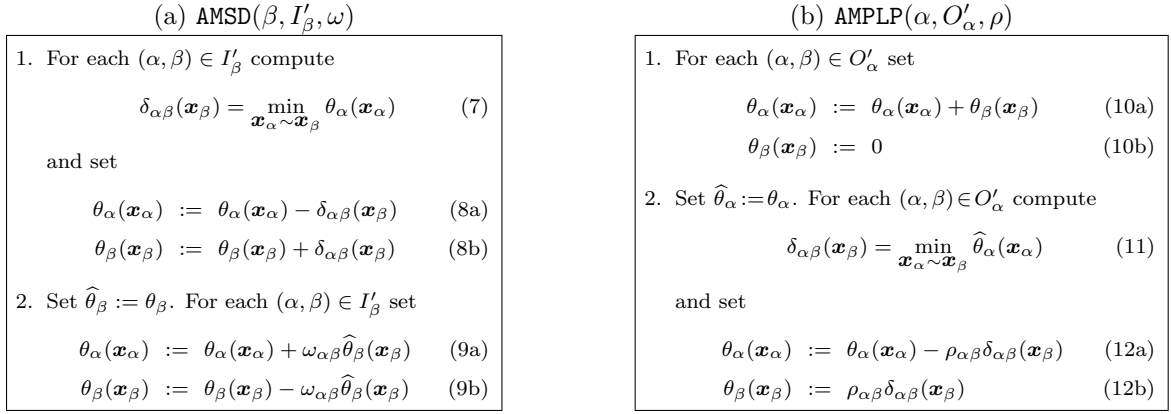

\begin{tabular}{@{\hspace{27pt}}c@{\hspace{50pt}}c}
(a) {\tt AMSD}$(\beta,\INp{\beta},\omega)$ & (b) {\tt AMPLP}$(\alpha,\OUTp{\alpha},\rho)$ \\

\framebox[66mm][l]{ \small
\begin{minipage}[l]{62mm}
%%%%%%%%%%%\framebox[88mm][l]{ \small
%%%%%%%%%%%\begin{minipage}[l]{84mm}
\vspace{0pt}
\footnotesize % remove for final version
\baselineskip 12pt

\begin{list}{}{\setlength{\leftmargin}{9pt}\setlength{\labelsep}{5pt}}
\item[1.] For each $(\alpha,\beta)\in \INp{\beta}$ compute 
\begin{equation}
\delta_{\alpha\beta}(\bx_\beta)=\min\limits_{\bxS_\alpha\sim\bxS_\beta} \theta_\alpha(\bx_\alpha)\label{eq:AMSDdelta}
\end{equation}
and set
\begin{subequations}
\begin{eqnarray}
\theta_\alpha(\bx_\alpha) &\!\!\!:=\!\!\!& \theta_\alpha(\bx_\alpha) - \delta_{\alpha\beta}(\bx_\beta) \qquad \\
\theta_\beta(\bx_\beta)   &\!\!\!:=\!\!\!& \theta_\beta(\bx_\beta) + \delta_{\alpha\beta}(\bx_\beta)
\end{eqnarray}
\end{subequations}
\item[2.] Set $\widehat\theta_\beta:=\theta_\beta$. For each $(\alpha,\beta)\in \INp{\beta}$ set 
\begin{subequations}
\begin{eqnarray}
\theta_\alpha(\bx_\alpha) &\!\!\!:=\!\!\!& \theta_\alpha(\bx_\alpha) + \omega_{\alpha\beta}\widehat\theta_{\beta}(\bx_\beta) \qquad \\
\theta_\beta(\bx_\beta) &\!\!\!:=\!\!\!& \theta_\beta(\bx_\beta) - \omega_{\alpha\beta}\widehat\theta_{\beta}(\bx_\beta)
\end{eqnarray}
\end{subequations}
\end{list}

\end{minipage}
} %framebox
&
\framebox[71mm][l]{ \small
\begin{minipage}[l]{67mm}
%%%%%%%%%%%\framebox[88mm][l]{ \small
%%%%%%%%%%%\begin{minipage}[l]{84mm}
\vspace{0pt}
\footnotesize % remove for final version
\baselineskip 12pt

\begin{list}{}{\setlength{\leftmargin}{9pt}\setlength{\labelsep}{5pt}}
\item[1.] For each $(\alpha,\beta)\in \OUTp{\alpha}$ set 
\begin{subequations}
\begin{eqnarray}
\theta_\alpha(\bx_\alpha) &\!\!\!:=\!\!\!& \theta_\alpha(\bx_\alpha) + \theta_{\beta}(\bx_\beta) \qquad \\
\theta_\beta(\bx_\beta)   &\!\!\!:=\!\!\!& 0
\end{eqnarray}
\end{subequations}
\item[2.] Set $\widehat\theta_\alpha\!:=\!\theta_\alpha$. For each $(\alpha,\beta)\!\in\! \OUTp{\alpha}$ compute
\begin{equation}
\delta_{\alpha\beta}(\bx_\beta)=\min\limits_{\bxS_\alpha\sim\bxS_\beta} \widehat\theta_\alpha(\bx_\alpha)
\end{equation}
and set
\begin{subequations}
\begin{eqnarray}
\theta_\alpha(\bx_\alpha) & \!\!\!:=\!\!\! & \theta_\alpha(\bx_\alpha) - \rho_{\alpha\beta}\delta_{\alpha\beta}(\bx_\beta)\qquad \\
\theta_\beta(\bx_\beta) & \!\!\!:=\!\!\! &  \rho_{\alpha\beta}\delta_{\alpha\beta}(\bx_\beta)
\end{eqnarray}
\end{subequations}
\end{list}

\end{minipage}
} %framebox 
\end{tabular}
\caption[]{\it
{\bf Anisotropic MSD and MPLP updates for factors $\beta$ and $\alpha$ respectively.}
$\omega$ is a probability distribution over $\INp{\beta}\cup\{\beta\}$,
and $\rho$ is a probability distribution over $\OUTp{\alpha}\cup\{\alpha\}$.
All updates should be done for all possible $\bx_\alpha$, $\bx_\beta$ with $\bx_\alpha\sim\bx_\beta$.
}
\label{fig:updates}
\end{figure*}

We will consider a generalization of this algorithm which we call {\em Anisotropic MSD} (AMSD).
It is given in Fig.~\ref{fig:updates}(a). 
In the first step it computes marginals for parents $\alpha$ of $\beta$
and ``moves'' them to factor $\beta$; we call it a {\em collection} step.
It then ``propagates'' obtained vector $\theta_\beta$ back to the parents 
with weights $\omega_{\alpha\beta}$. Here $\omega$ is some probability distribution over $\INp\beta\cup\{\beta\}$,
i.e.\ a non-negative vector with $\sum_{(\alpha,\beta)\in \INp\beta}\omega_{\alpha\beta}+\omega_\beta=1$.
If $\omega_\beta=0$ then vector $\theta_\beta$ will become zero, otherwise some ``mass'' will be kept at $\beta$
(namely, $\omega_\beta \,\widehat\theta_\beta$). 
\footnote{
Alternatively, update {\tt AMSD}$(\beta,\INp\beta,\omega)$
can be defined as follows:
reparameterize vectors $\theta_\beta$ and $\{\theta_\alpha\:|\:(\alpha,\beta)\in\INp\beta\}$
to get 
\begin{eqnarray*}
\theta_\beta(\bx_\beta)&=&\omega_{\beta}\widehat\theta_\beta(\bx_\beta) \\
\min\limits_{\bxS_\alpha\sim\bxS_\beta}\theta_\alpha(\bx_\alpha)&=&\omega_{\alpha\beta}\widehat\theta_\beta(\bx_\beta) \qquad\forall (\alpha,\beta)\in\INp\beta
\end{eqnarray*}
for some vector $\widehat\theta_\beta$.
%where $\widehat\theta\beta(\bx_\beta)=\theta_\beta(\bx_\beta)+\sum_{(\alpha,\beta)\in\INp\beta}
}
The following fact can easily be shown (see Appendix~\ref{app:MSDproof}).
\begin{proposition}
Procedure {\tt AMSD}$(\beta,\INp\beta,\omega)$ maximizes $\Phi(m)$ over $(m_{\alpha\beta}\:|\:(\alpha,\beta)\in \INp\beta)$.
\label{prop:AMSD}
\end{proposition}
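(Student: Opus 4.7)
My plan is to reduce to a small, self-contained computation and verify that {\tt AMSD} achieves a natural upper bound on the restricted objective.

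First I would observe that changing only the messages $m_{\alpha\beta}$ with $(\alpha,\beta)\in\INp\beta$ affects, via~\eqref{eq:rep}, only the potentials $\theta_\beta$ and those $\theta_\alpha$ with $(\alpha,\beta)\in\INp\beta$. All other summands of $\Phi(m)$ are constant in these messages, so it suffices to maximize
\[
\Psi(m) \;=\; \min_{\bxS_\beta}\theta_\beta(\bx_\beta) + \sum_{(\alpha,\beta)\in\INp\beta}\min_{\bxS_\alpha}\theta_\alpha(\bx_\alpha).
\]

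Next I would introduce the auxiliary quantity
\[
A(\bx_\beta) \;=\; \theta_\beta(\bx_\beta) + \sum_{(\alpha,\beta)\in\INp\beta}\widehat\theta_\alpha(\bx_\beta),
\]
where $\widehat\theta_\alpha(\bx_\beta)=\min_{\bxS_\alpha\sim\bxS_\beta}\theta_\alpha(\bx_\alpha)$. The key property is that $A$ is invariant under the messages being optimized: shifting $m_{\alpha\beta}(\bx_\beta)$ by $\delta$ adds $\delta$ to $\theta_\beta(\bx_\beta)$ and subtracts $\delta$ from $\theta_\alpha(\bx_\alpha)$ for every $\bx_\alpha\sim\bx_\beta$, hence from $\widehat\theta_\alpha(\bx_\beta)$; the two changes cancel. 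From this I would deduce the upper bound $\Psi(m)\le\min_{\bxS_\beta}A(\bx_\beta)$: for any $\bx_\beta^\star$ we have $\min_{\bxS_\beta}\theta_\beta\le\theta_\beta(\bx_\beta^\star)$ and $\min_{\bxS_\alpha}\theta_\alpha\le\widehat\theta_\alpha(\bx_\beta^\star)$, and summing and then taking $\bx_\beta^\star$ to minimize $A$ gives the bound.

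Finally I would verify that {\tt AMSD} attains this bound. After the collection step, the new $\theta_\beta$ coincides with $A$ (it is exactly the stored $\widehat\theta_\beta$ in the pseudocode) and $\min_{\bxS_\alpha\sim\bxS_\beta}\theta_\alpha(\bx_\alpha)=0$ for every $\bx_\beta$. After the propagation step, $\theta_\beta=\omega_\beta\widehat\theta_\beta$ and a quantity $\omega_{\alpha\beta}\widehat\theta_\beta(\bx_\beta)$ is added to each $\theta_\alpha(\bx_\alpha)$. Using $\omega_{\alpha\beta}\ge 0$ to interchange the inner and outer minimizations yields $\min_{\bxS_\alpha}\theta_\alpha=\omega_{\alpha\beta}\min_{\bxS_\beta}\widehat\theta_\beta$. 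Summing these minima and invoking $\omega_\beta+\sum_{(\alpha,\beta)\in\INp\beta}\omega_{\alpha\beta}=1$ gives $\Psi=\min_{\bxS_\beta}\widehat\theta_\beta=\min_{\bxS_\beta}A$, meeting the upper bound. The routine part is the bookkeeping for how~\eqref{eq:rep} transforms under each update; the only mildly subtle step is the interchange of minima in the propagation analysis, which genuinely relies on $\omega_{\alpha\beta}\ge 0$. Everything else reduces to cancellations in the reparameterization.
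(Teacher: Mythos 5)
Your proof is correct and takes essentially the same route as the paper's (Appendix~A): both arguments rest on the observation that $\min_{\bxS_\beta}\bigl[\theta_\beta(\bx_\beta)+\sum_{(\alpha,\beta)\in\INp\beta}\min_{\bxS_\alpha\sim\bxS_\beta}\theta_\alpha(\bx_\alpha)\bigr]$ upper-bounds the restricted objective and is unchanged by the messages being optimized, and that {\tt AMSD} attains this value, namely $\min_{\bxS_\beta}\widehat\theta_\beta(\bx_\beta)$. The only cosmetic difference is that you make the invariance of $A$ explicit up front, whereas the paper realizes the same message cancellation by evaluating every minimum at the minimizer $\bx^\ast_\beta$ of $\widehat\theta_\beta$.
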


If $\INp\beta$ contains a single edge $\{(\alpha,\beta)\}$ and $\omega$ is a uniform distribution over $\INp\beta\cup\{\beta\}$
(i.e.\ $\omega_{\alpha\beta}=\omega_\beta=\frac{1}{2}$) then the procedure becomes equivalent to MSD.
% (if $\INp\beta$ contains a single edge $(\alpha,\beta)$)
%or to CMP (if $\INp\beta=\IN\beta$).~\footnote{In the context of pairwise graphical models, Werner mentions in~\cite[Remark 4]{Werner:PAMI07}
%that MSD can be made more efficient: iterating MSD for all incoming edges $(\alpha,\beta)$ (with $\beta$ fixed)
%until convergence can be equivalently done by a single transformation. This would give CMP updates.
%The transformation, however, is not written down explicitly in~\cite{Werner:PAMI07}.}
\footnote{MSD algorithm given in~\cite{Werner:PAMI10} updates just
a single message $m_{\alpha\beta}$ for some edge $(\alpha,\beta)\in J$; this corresponds to {\tt AMSD}$(\beta,\INp{\beta},\omega)$
with $|\INp{\beta}|=1$. If $\INp{\beta}=\IN\beta$ then
{\tt AMSD}$(\beta,\INp{\beta},\omega)$ with the uniform distribution $\omega$ is equivalent to performing single-edge MSD updates until convergence.
Such version of MSD for pairwise energies was mentioned in~\cite[Remark 4]{Werner:PAMI07}, although without an explicit formula.
}
If $\INp\beta=\IN\beta$ then the procedure becomes equivalent to the version of CMP described in~\cite[Algorithm 5]{Hazan:10}
(for the case when $(\calF,J)$ is a factor graph; we need to take $\omega_{\alpha\beta}=c_\alpha/\hat c_{\beta}$).

The work \cite{Hazan:10} used a fixed distribution $\omega$ for each factor.
We will show, however, that allowing non-fixed distributions 
may lead to significant gains in the performance. As we will see in section~\ref{sec:SRMP}, a particular scheme 
together with a particular order of processing factors will correspond
to the TRW-S algorithm~\cite{Kolmogorov:TRWS:PAMI06} (in the case of pairwise models), which is often faster than MSD/CMP.
%Weights $\omega$ will depend on the order in which the factors are processed.

\myparagraph{Case II: Anisotropic MPLP} %\label{sec:AMPLP:intro}
Let us now consider factor $\alpha\in\calF$ and a non-empty set of outgoing edges $\OUTp\alpha\subseteq \OUT{\alpha}$.
This case can be tackled using the {\em MPLP} algorithm \cite{Globerson:NIPS07,SonGloJaa_optbook}.

%It was introduced by  Globerson and Jaakkola~\cite{Globerson:NIPS07}
%for pairwise energies. % Sontag et al.~\cite{Sontag:UAI2008} generalized it to energies with ternary terms; modified equations were later given by Sontag in~\cite[Figure A-1]{Sontag:thesis10}.
%Sontag presented MPLP updates for energies with ternary terms (Fig. A-1 in~\cite{Sontag:thesis10}),
%while Sontag et al.~\cite{SonGloJaa_optbook} derived equations
%for the case when $(\calF,J)$ is a factor graph, i.e. $J=\{(\alpha,\{v\})\:|\:v\in\alpha\in\calF\}$.

Analogously to Case I, it can be generalized to {\em Anisotropic MPLP} (AMPLP) - see Figure~\ref{fig:updates}(b).
In the first step (``collection'') vectors $\theta_\beta$ for children $\beta$ of $\alpha$ 
are ``moved'' to factor $\alpha$.
We then compute min-marginals of $\alpha$ and ``propagate'' them to children $\beta$
with weights $\rho_{\alpha\beta}$. Here $\rho$ is some probability distribution over $\OUTp\alpha\cup\{\alpha\}$.
The following can easily be shown (see Appendix~\ref{app:MPLPproof}).
\begin{proposition}
Procedure {\tt AMPLP}$(\beta,\OUTp\alpha,\rho)$ maximizes $\Phi(m)$ over $(m_{\alpha\beta}\:|\:(\alpha,\beta)\in \OUTp\alpha)$.
\label{prop:AMPLP}
\end{proposition}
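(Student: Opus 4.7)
The plan is to mirror the proof of Proposition~\ref{prop:AMSD} for AMSD. The starting point is the observation that the quantity
\[
T(\bx_\alpha) \;=\; \theta_\alpha(\bx_\alpha) + \!\!\sum_{(\alpha,\beta)\in\OUTp\alpha}\!\! \theta_\beta(\bx_\beta)
\]
is an invariant of the free messages: inspecting~\eqref{eq:rep}, a change of $m_{\alpha\beta}$ by $\delta(\bx_\beta)$ subtracts $\delta(\bx_\beta)$ from $\theta_\alpha(\bx_\alpha)$ and adds the same amount to $\theta_\beta(\bx_\beta)$, and the two contributions cancel in $T$ under the implicit restriction convention.

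Next I would isolate the terms of $\Phi(m)$ that depend on the free messages: the minimum of $\theta_\alpha$ together with the minima of $\theta_\beta$ for $(\alpha,\beta)\in\OUTp\alpha$; call their sum $\Phi_\alpha(m)$. Since each summand of $T(\bx_\alpha)$ is at least the corresponding min, $T(\bx_\alpha)\ge \Phi_\alpha(m)$ for every $\bx_\alpha$; minimizing the left-hand side yields the message-independent upper bound $T^\star:=\min_{\bxS_\alpha}T(\bx_\alpha)\ge \Phi_\alpha(m)$.

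It then remains to check that AMPLP attains $T^\star$. After the collection step, $\hat\theta_\beta\equiv 0$ and $\hat\theta_\alpha = T$. Writing $\mu_\beta(\bx_\beta):=\min_{\bxS_\alpha\sim\bxS_\beta}T(\bx_\alpha)$ for the min-marginal, the propagation step produces $\theta'_\beta(\bx_\beta) = \rho_{\alpha\beta}\mu_\beta(\bx_\beta)$ and $\theta'_\alpha(\bx_\alpha) = T(\bx_\alpha) - \sum_\beta\rho_{\alpha\beta}\mu_\beta(\bx_\beta)$. Using $\mu_\beta(\bx_\beta)\le T(\bx_\alpha)$ whenever $\bx_\alpha\sim\bx_\beta$, this gives the pointwise bound $\theta'_\alpha(\bx_\alpha)\ge \rho_\alpha T(\bx_\alpha)\ge \rho_\alpha T^\star$, with equality at any global minimizer of $T$; likewise $\min_{\bxS_\beta}\theta'_\beta = \rho_{\alpha\beta}T^\star$. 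Summing these local contributions and using $\rho_\alpha+\sum_\beta\rho_{\alpha\beta}=1$ gives $\Phi_\alpha(m')=T^\star$, matching the bound.

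The main obstacle I anticipate is the pointwise inequality $\theta'_\alpha(\bx_\alpha)\ge \rho_\alpha T^\star$ for every $\bx_\alpha$, not just at a minimizer of $T$: this is precisely where the conditional min-marginal $\mu_\beta$ (with the restriction $\bx_\alpha\sim\bx_\beta$) is essential, and the argument would fail had one tried to subtract the unconditional minimum $\min_{\bxS_\alpha}T$ in the propagation step.
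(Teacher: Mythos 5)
Your proposal is correct and follows essentially the same route as the paper's Appendix~B proof: the message-independent upper bound $T^\star=\min_{\bxS_\alpha}T(\bx_\alpha)$ is exactly the paper's observation that $\Phi(\theta[m])$ evaluated at a consistent labeling $\bx^\ast_\alpha$ makes the messages cancel, and your pointwise bound $\theta'_\alpha(\bx_\alpha)\ge\rho_\alpha T(\bx_\alpha)$ is the same decomposition $\rho_\alpha\widehat\theta_\alpha+\sum_\beta\rho_{\alpha\beta}[\widehat\theta_\alpha-\delta_{\alpha\beta}]$ the paper uses to show that a minimizer of $\widehat\theta_\alpha$ simultaneously minimizes all reparameterized terms. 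The only difference is presentational (you establish the bound before attainment, the paper does the reverse), so no substantive gap.
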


The updates given in \cite{Globerson:NIPS07,Sontag:thesis10,SonGloJaa_optbook}
correspond to {\tt AMPLP}$(\beta,\OUT{\alpha},\rho)$ where $\rho$ is a uniform probability distribution over $\OUT{\alpha}$
(with $\rho_\alpha=0$). By analogy with Case I, we conjecture that a different weighting (that depends on the order of processing factors)
could lead to faster convergence. However, we leave this as a question for future research, and focus on Case I instead.

For completeness, in Appendix~\ref{app:MPLPmessages} we give an implementation of AMPLP via messages;
it is slightly different from implementations in \cite{Globerson:NIPS07,Sontag:thesis10,SonGloJaa_optbook}
since we store explicitly vectors $\theta_\beta$ for factors $\beta$ that have at least one incoming edge $(\alpha,\beta)\in J$.
%in the case of general graph $(\calF,J)$

%%%%%%%%%%%%%%%%%%%%%%%%%%%%%%%%%%%%%%%%%%%%%%%%%%%%%%%%%%%%%%%%%%%%%%%%%%%%%%%%

\section{Sequential Reweighted Message Passing}\label{sec:SRMP}

In this section we consider a special case of anisotropic MSD updates which we call a {\em Sequential Reweighted Message Passing} (SRMP).
To simplify the presentation, we will assume that $|\OUT{\alpha}|\ne 1$ for all $\alpha\in\calF$.
(This is not a severe restriction: if there is factor $\alpha$ with a single child $\beta$ then we can reparameterize $\theta$
to get $\min\limits_{\bxS_\alpha\sim\bxS_\beta}\theta_\alpha(\bx_\alpha)=0$ for all $\bx_\beta$, and then remove factor $\alpha$; this will not affect the relaxation.)

Let $\calS\subset \calF$ be the set of factors that have at least one incoming edge. 
Let us select some total order $\preceq$ on $\calS$.
SRMP will alternate between a forward pass (processing factors $\beta\in\calS$ in the order $\preceq$)
and a backward pass (processing these factors in the reverse order),
with $\INp{\beta}=\IN\beta$.
Next, we discuss how we select distributions $\omega$ over $\IN{\beta}\cup\{\beta\}$ for $\beta\in\calS$.
We will use different distributions during forward and backward passes; they will be denoted as $\omegafw$ and $\omegabw$ respectively.
%Below we discuss only weights $\omega^{\tt fw}$; weights $\omega^{\tt bw}$ are set in an analogous way.

Let $\INfw{\beta}$ be the set of edges $(\alpha,\beta)\in \IN{\beta}$
such that $\alpha$ is accessed after calling {\tt AMSD}$(\beta,\IN{\beta},\omegafw)$ in the forward pass and before calling
{\tt AMSD}$(\beta,\IN{\beta},\omegabw)$ in the subsequent backward pass. 
Formally,
\begin{subequations}\label{eq:weightsTRWS:ab}
\begin{equation}
\hspace{1pt}\INfw{\beta}=\left\{ (\alpha,\beta)\in J \;
\begin{picture}(1,0)
  \put(3,-10){\line(0,1){24}}
\end{picture}
\; \mbox{\begin{tabular}{l} 
$(\alpha\in\calS\mbox{ AND }\alpha\succ\beta)\mbox{ ~OR}$ \\
$(\exists (\alpha,\gamma)\in J\mbox{ s.t.\ }\gamma\succ\beta)$ 
\end{tabular}} \right\}\hspace{-10pt}
\label{eq:weightsTRWS:a}
\end{equation}
%\begin{eqnarray}
%\INfw{\beta}&=&\{(\alpha,\beta)\in J\:|\:(\alpha\in\calS\mbox{ AND }\alpha\succ\beta) 
%\mbox{ ~OR~ }
%(\exists (\alpha,\gamma)\in J\mbox{ s.t.\ }\gamma\succ\beta)
%\}
%\label{eq:weightsTRWS:a}
%\end{eqnarray}
%
Similarly, let $\OUTfw{\beta}$ be the set of edges $(\beta,\gamma)\in J$
such as $\gamma$ is processed after $\beta$ in the forward pass:
\begin{eqnarray}
\OUTfw{\beta}&=&\{(\beta,\gamma)\in J\:|\:\gamma\succ \beta\} 
\label{eq:weightsTRWS:b}
\end{eqnarray}
(Note that $\gamma\in\calS$ since $\gamma$ has an incoming edge,
so the comparison $\gamma\succ \beta$ is valid.)
%both factors in this formula belong to $\calS$: for $\beta$ this was assumed earlier,
\end{subequations}
We propose the following formula as the default weighting for SRMP in the forward pass:
\begin{eqnarray}
\hspace{-5pt}\omegafw_{\alpha\beta} &\!\!\!\!\!=\!\!\!\!\!& \begin{cases}
\frac{1}{|\OUTfw{\beta}|+\max\{|\INfw{\beta}|,|\IN{\beta}-\INfw{\beta}|\}} & \mbox{if }(\alpha,\beta)\!\in\! \INfw{\beta} \\
0 & \mbox{if }(\alpha,\beta)\!\in\! \IN{\beta}-\INfw{\beta}\;\;\;\;
\end{cases}
\label{eq:weightsTRWS}
\end{eqnarray}
It can be checked that the weight $\omegafw_\beta=1-\sum_{(\alpha,\beta)\in \IN{\beta}} \omegafw_{\alpha\beta}$ is non-negative, so this is a valid weighting.
We define sets $\INbw{\beta}$, $\OUTbw{\beta}$ and weights $\omegabw_{\alpha\beta}$ for the backward pass in a similar way;
the only difference to \eqref{eq:weightsTRWS:ab}, \eqref{eq:weightsTRWS} is that ``$\succ$'' is replaced with ``$\prec$'':
%but only with the ordering  $\preceq$ reversed. (More precisely, 
% use similar equations, but only with the ordering $\preceq$ reversed. Thus, in eq.~\eqref{TODO} we need to replace ``$\succ$'' with ``$\prec$''.
%
\begin{subequations}
\begin{equation}
\hspace{1pt}\INbw{\beta}=\left\{ (\alpha,\beta)\in J \;
\begin{picture}(1,0)
  \put(3,-10){\line(0,1){24}}
\end{picture}
\; \mbox{\begin{tabular}{l} 
$(\alpha\in\calS\mbox{ AND }\alpha\prec\beta)\mbox{ ~OR}$ \\
$(\exists (\alpha,\gamma)\in J\mbox{ s.t.\ }\gamma\prec\beta)$ 
\end{tabular}} \right\}\hspace{-10pt}
\label{eq:weightsTRWS:bw:a}
\end{equation}
\begin{eqnarray}
\OUTbw{\beta}&=&\{(\beta,\gamma)\in J\:|\:\gamma\prec \beta\} 
\label{eq:weightsTRWS:bw:b}
\end{eqnarray}
\end{subequations}
\begin{eqnarray}
\hspace{-5pt}\omegabw_{\alpha\beta} &\!\!\!\!\!=\!\!\!\!\!& \begin{cases}
\frac{1}{|\OUTbw{\beta}|+\max\{|\INbw{\beta}|,|\IN{\beta}-\INbw{\beta}|\}} & \mbox{if }(\alpha,\beta)\!\in\! \INbw{\beta} \\
0 & \mbox{if }(\alpha,\beta)\!\in\! \IN{\beta}-\INbw{\beta}\;\;\;\;
\end{cases}
\label{eq:weightsTRWS:bw}
\end{eqnarray}
Note that $\IN{\beta}=\INfw{\beta}\cup\INbw{\beta}$.
Furthermore, in the case of pairwise models sets 
$\INfw{\beta}$ and $\INbw{\beta}$ are disjoint.

Our motivation for the choice~\eqref{eq:weightsTRWS} is as follows.
First of all, we claim that weight $\omegafw_{\alpha\beta}$ for edges $(\alpha,\beta)\in \IN{\beta}-\INfw{\beta}$ can be set to zero without loss of generality.
Indeed, if $\omegafw_{\alpha\beta}=c>0$ then we can transform
$\omegafw_{\alpha\beta}:=0$, $\omegafw_{\beta}:=\omegafw_{\beta}+c$ without affecting the behaviour of the algorithm.

We also decided to set weights $\omegafw_{\alpha\beta}$ to the same value for all edges $(\alpha,\beta)\in\INfw{\beta}$; let us call it $\lambda$.
We must have $\lambda\le \frac{1}{|\INfw{\beta}|}$ to guarantee that $\omegafw_\beta\ge 0$.

If $\OUTfw{\beta}$ is non-empty then we should leave some ``mass'' at $\beta$, i.e.\ choose $\lambda<\frac{1}{|\INfw{\beta}|}$;
this is needed for ensuring that we get a {\em local arc consistency} 
upon convergence of the lower bound (see section~\ref{sec:Jconsistency}).
This was the reason for adding $|\OUTfw{\beta}|$ to the demoninator of the expression in \eqref{eq:weightsTRWS}.

Expression $\max\{|\INfw{\beta}|,|\IN{\beta}-\INfw{\beta}|\}$ in  \eqref{eq:weightsTRWS} was chosen
to make SRMP equivalent to TRW-S in the case of the pairwise models (this equivalence is discussed later in this section).

\noindent{\bf Remark 1~}
{\em 
Setting
$
\lambda = \frac{1}{1 + |\IN{\beta}|}
$
would give the CMP algorithm (with uniform weights $\omega$) that processes factors in $\calS$ using forward and backward passes.
The resulting weight is usually smaller than the weight in eq.~\eqref{eq:weightsTRWS}.

We conjecture that generalized TRW-S~\cite{GTRWS:arXiv12} is also a special case of SRMP.
In particular, if $J=\{(\alpha,\{i\})\:|\:i\in \alpha\in\calF,|\alpha|\ge 2\}$ then setting
$\lambda=\frac{1}{\max\{|\IN{\beta}-\INbw{\beta}|,|\IN{\beta}-\INfw{\beta}|\}}$ would
 give GTRW-S with a uniform distribution over junction chains (and assuming that we take the longest possible chains).
The resulting weight is the same or smaller than the weight in eq.~\eqref{eq:weightsTRWS}.
}

\noindent{\bf Remark 2~} {\em We tried one other choice, namely setting $\lambda=\frac{1}{|\INfw{\beta}|}$
in the case of pairwise models.
This gives the same or larger weights compared to~\eqref{eq:weightsTRWS}.
Somewhat surprisingly to us, in our preliminary tests this appeared to perform slightly
worse than the choice \eqref{eq:weightsTRWS}. A possible informal explanation is as follows:
operation {\tt AMSD}$(\beta,\IN{\beta},\omegafw)$ sends the mass away
from $\beta$, and it may never come back. It is thus desirable to keep
some mass at $\beta$, especially when $|\INbw{\beta}|\gg|\INfw{\beta}|$.
}

\myparagraph{Implementation via messages}
A standard approach for implementing message passing algorithms is to store original vectors $\bar\theta_\alpha$ for factors $\alpha\in\calF$
 and messages $m_{\alpha\beta}$ for edges $(\alpha,\beta)\in J$
that define current reparameterization $\theta=\bar\theta[m]$ via eq.~\eqref{eq:rep}.
This leads to Algorithm~\ref{alg:SRMP:messages}. As in Fig.~\ref{fig:updates},
all updates should be done for all possible $\bx_\alpha$, $\bx_\beta$ with $\bx_\alpha\sim\bx_\beta$.
As usual, for numerical stability messages can be normalized by an additive constant so that
$\min_{\bxS_\beta}m_{\alpha\beta}(\bx_\beta)=0$ for all $(\alpha,\beta)\in J$;
this does not affect the behaviour of the algorithm.

\begin{algorithm}[!h]
\caption{Sequential Reweighted Message Passing (SRMP). 
}\label{alg:SRMP:messages}
\begin{algorithmic}[1]
\STATE initialization: set $m_{\alpha\beta}(\bx_\beta)=0$ for all $(\alpha,\beta)\in J$
\STATE {\bf repeat} until some stopping criterion
\STATE ~~ {\bf for} each factor $\beta\in\calS$ {\bf do} in the order $\preceq$
\STATE ~~~~~~ for each edge $(\alpha,\beta)\in \INbw{\beta}$  update $m_{\alpha\beta}(\bx_\beta):=$
\begin{equation}
%m_{\alpha\beta}(\bx_\beta):=
\hspace{0pt}
\min_{\bxS_\alpha\sim\bxS_\beta} 
\left\{ 
    \bar\theta_\alpha(\bx_\alpha)
   +\!\!\!\!\!\!\sum_{(\gamma,\alpha)\in \IN{\alpha}}\!\!\!\!\!\! m_{\gamma\alpha}(\bx_\alpha)
   -\!\!\!\!\!\!\!\!\!\sum_{(\alpha,\gamma)\in \OUT{\alpha},\gamma\ne\beta}\!\!\!\!\!\!\!\!\! m_{\alpha\gamma}(\bx_\gamma)
\right\}\hspace{-10pt}
\label{eq:SRMP:update}
\end{equation}
\STATE ~~~~~~ compute $\theta_\beta(\bx_\beta)=$%\bar\theta_\beta(\bx_\beta)+\!\!\!\mathlarger\sum\limits_{(\alpha,\beta)\in \IN{\beta}} \!\!\!m_{\alpha\beta}(\bx_\beta)$ 
$$
\bar\theta_\beta(\bx_\beta)+\!\!\!\!\!\!\sum_{(\alpha,\beta)\in\IN{\beta}}\!\!m_{\alpha\beta}(\bx_\beta)-\!\!\!\!\!\!\sum_{(\beta,\gamma)\in \OUT{\beta}}\!\!m_{\beta\gamma}(\bx_\gamma)
$$
\STATE ~~~~~~ for each edge $(\alpha,\beta)\in \INfw{\beta}$ update \\
  $~~~~~~~~~~~~~m_{\alpha\beta}(\bx_\beta):=m_{\alpha\beta}(\bx_\beta)-\omegafw_{\alpha\beta}\theta_\beta(\bx_\beta)$
\STATE ~~ {\bf end for}
\STATE ~~ reverse ordering $\preceq$, 
%swap $\INfw{\beta}\!\leftrightarrow\! \INbw{\beta},\omegafw_\beta\!\leftrightarrow\! \omegabw_\beta$ ~~ $\forall\beta$
swap $\INfw{\beta}\!\leftrightarrow\! \INbw{\beta},\omegafw_{\alpha\beta}\!\leftrightarrow\! \omegabw_{\alpha\beta}$ %~~ $\forall\beta$
\STATE {\bf end repeat}
\end{algorithmic}
\end{algorithm}

Note that update \eqref{eq:SRMP:update} is performed only for edges $(\alpha,\beta)\in \INbw{\beta}$
while step 1 in {\tt AMSD}$(\beta,\IN{\beta},\omegafw)$ requires updates for edges $(\alpha,\beta)\in \IN{\beta}$.
This discrepancy is justified by the proposition below.

\begin{proposition}
Starting with the second pass, update \eqref{eq:SRMP:update} for edge $(\alpha,\beta)\in \IN{\beta}-\INbw{\beta}$ would not change vector $m_{\alpha\beta}$.
\label{prop:SRMPfwbw}
\end{proposition}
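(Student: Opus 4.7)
The plan is to reformulate the update in \eqref{eq:SRMP:update} as an additive modification of $m_{\alpha\beta}$ and then show the modification is zero. Substituting \eqref{eq:rep} into the right-hand side of \eqref{eq:SRMP:update} shows that it equals $m_{\alpha\beta}(\bx_\beta)+\delta_{\alpha\beta}(\bx_\beta)$, where $\delta_{\alpha\beta}(\bx_\beta):=\min_{\bxS_\alpha\sim\bxS_\beta}\theta_\alpha(\bx_\alpha)$ is evaluated at the reparameterization $\theta=\bar\theta[m]$ current at line~4. Hence the update leaves $m_{\alpha\beta}$ unchanged iff $\delta_{\alpha\beta}\equiv 0$ at that moment, and this is what I would prove.

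Let $P\ge 2$ be the current pass and $P-1$ the previous one. Because line~7 reverses $\preceq$ and swaps $\INfw\leftrightarrow\INbw$ between passes, one has $\INbw^{P-1}\beta=\INfw^P\beta$ and $\INfw^{P-1}\beta=\INbw^P\beta$. Since $\IN\beta=\INfw\beta\cup\INbw\beta$, the hypothesis $(\alpha,\beta)\in\IN\beta-\INbw^P\beta$ forces $(\alpha,\beta)\in\INfw^P\beta=\INbw^{P-1}\beta$, so during pass $P-1$'s processing of $\beta$ the collection update was executed for this edge, and immediately afterwards $\min_{\bxS_\alpha\sim\bxS_\beta}\theta_\alpha(\bx_\alpha)=0$. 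The ensuing propagation (line~6) in pass $P-1$ acts only on $\INfw^{P-1}\beta=\INbw^P\beta$, which excludes our edge, so the property $\delta_{\alpha\beta}\equiv 0$ persists through the end of $\beta$'s processing in pass $P-1$.

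The main obstacle is verifying that nothing perturbs $\theta_\alpha$ between that moment and the instant pass $P$ revisits $\beta$. In this interval only factors $\beta'$ with $\beta'\prec\beta$ in the pass-$P$ ordering are touched (first the tail of pass $P-1$ in reverse order, then the head of pass $P$ up to but not including $\beta$). Inspecting Algorithm~\ref{alg:SRMP:messages}, the value $\theta_\alpha$ implied by $\bar\theta[m]$ can shift during the processing of $\beta'$ only through updates of messages incident to $\alpha$, namely when $\beta'=\alpha$ (with $\alpha\in\calS$) or when some edge $(\alpha,\beta')\in J$ with $\beta'\ne\beta$ is touched in line~4 or line~6. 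Negating the defining conditions in \eqref{eq:weightsTRWS:bw:a} using the hypothesis $(\alpha,\beta)\notin\INbw^P\beta$ yields: (i) it is not the case that $\alpha\in\calS$ and $\alpha\prec\beta$, and (ii) every child $\gamma$ of $\alpha$ satisfies $\gamma\succeq\beta$. Condition~(i) rules out $\beta'=\alpha$, and condition~(ii) rules out $\beta'\ne\beta$ being a child of $\alpha$ with $\beta'\prec\beta$. Hence no eligible perturbing factor is processed in the interval, $\theta_\alpha$ is unchanged, and combined with the previous paragraph $\delta_{\alpha\beta}\equiv 0$ holds when line~4 of pass $P$ reaches $(\alpha,\beta)$, so the update is a no-op.
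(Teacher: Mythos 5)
Your proof is correct and follows essentially the same route as the paper's: identify that update \eqref{eq:SRMP:update} adds $\delta_{\alpha\beta}=\min_{\bxS_\alpha\sim\bxS_\beta}\theta_\alpha$ to the message, observe that this quantity was zeroed when the edge was collected during the previous pass's processing of $\beta$ (and not re-perturbed by that pass's propagation step), and then use the negation of the membership conditions for $\INbw{\beta}$ to show that neither $\alpha$ nor any other child of $\alpha$ is processed in the intervening interval, so $\theta_\alpha$ is unchanged. You are somewhat more explicit than the paper about the pass-indexed bookkeeping and about why the propagation step of the previous pass skips this edge, but the argument is the same.
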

\begin{proof}
We will refer to this update of $\beta$ as ``update II'', and to the previous update of $\beta$ during the preceding backward pass as ``update I''.
For each edge $(\alpha,\gamma)\in J$ with $\gamma\ne \beta$ we have $\gamma\succ\beta$ (since $(\alpha,\beta)\notin\INbw{\beta}$), therefore such $\gamma$ will not be processed
between updates I and II. Similarly, if $\alpha\in\calS$ then $\alpha\succ\beta$ (again, since $(\alpha,\beta)\notin\INbw{\beta}$), so $\alpha$ also will not be
processed. Therefore, vector $\theta_\alpha$ is never modified between updates I and II.
Immediately after update I we have $\min\limits_{\bxS_\alpha\sim\bxS_\beta}\theta_\alpha(\bx_\alpha)=0$
for all $\bx_\beta$, and so vector $\delta_{\alpha\beta}$ in~\eqref{eq:AMSDdelta} during update II would be zero. This implies the claim.
\end{proof}

%Skipping update for edges $(\alpha,\beta)\in \IN{\beta}-\INbw{\beta}$ is justified by the proposition below.
For pairwise graphical models skipping unnecessary updates reduces the amount of computation by approximately a factor of 2.
Note that the argument of the proposition does not apply to the very first forward pass of Algorithm~\ref{alg:SRMP:messages}.
Therefore, this pass is not equivalent to AMSD updates, and the lower bound may potentially decrease during the first pass.

\myparagraph{Alternative implementation}
At the first glance Algorithm~\ref{alg:SRMP:messages} may appear to be different from the TRW-S algorithm~\cite{Kolmogorov:TRWS:PAMI06} (in the case of pairwise models).
For example, if the graph is a chain then after the first iteration messages in TRW-S
will converge, while in SRMP they will keep changing (in general, they will be different after forward and backward passes).
To show a connection to TRW-S, we will describe an alternative implementation of SRMP with the same update rules as in TRW-S.
We will assume that $|\alpha|\le 2$ for $\alpha\in\calF$ and 
 $J=\{(\{i,j\},\{i\}),(\{i,j\},\{j\})\:|\:\{i,j\}\in\calF\}$.

The idea is to use messages $\widehat m_{(\alpha,\beta)}$ for $(\alpha,\beta)\in J$ that have a different intepretation.
Current reparameterization $\theta$ will be determined from $\bar\theta$ and $\widehat m$ using a two-step procedure:
(i) compute $\widehat\theta=\bar\theta[\widehat m]$ via eq.~\eqref{eq:rep};
(ii) compute
%\begin{subequations}\label{eq:reptwo}
\begin{eqnarray*}
\theta_\alpha(\bx_\alpha)&\!\!\!\!\!=\!\!\!\!\!&\widehat\theta_\alpha(\bx_\alpha)+\!\!\!\!\!\sum_{(\alpha,\beta)\in \OUT{\alpha}}\!\!\!\!\!\omega_{\alpha\beta}\widehat\theta_\beta(\bx_\beta)\;\;\quad\forall \alpha\in\calF, |\alpha|=2 \qquad\label{eq:reptwo:a}\\
\theta_\beta(\bx_\beta)&\!\!\!\!\!=\!\!\!\!\!&\omega_{\beta}\widehat\theta_\beta(\bx_\beta)\hspace{68pt}\;\;\quad\forall \beta\in\calF,|\beta|=1\qquad\label{eq:reptwo:b}
\end{eqnarray*}
%\end{subequations}
where $\omega_{\alpha\beta},\omega_\beta$ are the weights used in the last update for $\beta$.
Update rules with this interpretation are given in Appendix~\ref{app:pairwiseSRMP}; if the weights are chosen as in~\eqref{eq:weightsTRWS} and  \eqref{eq:weightsTRWS:bw}
then these updates are equivalent to those in~\cite{Kolmogorov:TRWS:PAMI06}.
%These weights will alternate between forward and backward passes, and therefore
%the current reparameterization $\theta$ may keep changing even if messages $\hat m$ have converged.

\myparagraph{Extracting primal solution}
We used the following scheme for extracting a primal solution $\bx$.
In the beginning of a forward pass we mark all nodes $i\in V$ as ``unlabeled''.
Now consider procedure {\tt AMSD}$(\beta,\IN{\beta},\omegafw)$
(lines 4-6 in Algorithm~\ref{alg:SRMP:messages}).
We assign labels to all nodes in $i\in\beta$ as follows:
(i) for each $(\alpha,\beta)\in \IN{\beta}$ compute ``restricted'' messages $m^{\tt \star}_{\alpha\beta}(\bx_\beta)$
using the following modification of eq.~\eqref{eq:SRMP:update}:
instead of minimizing over all labelings $\bx_\alpha\sim\bx_\beta$,
we minimize only over those labelings $\bx_\alpha$ that are consistent with currently labeled nodes $i\in\alpha$;
(ii) compute $\theta^\star_\beta(\bx_\beta)=\bar\theta_\beta(\bx_\beta)+\sum_{(\alpha,\beta)\in\IN{\beta}} m^\star_{\alpha\beta}(\bx_\beta)$
for labelings $\bx_\beta$ consistent with currently labeled nodes $i\in\beta$,
and choose a labeling with the smallest cost $\theta^\star_\beta(\bx_\beta)$.
It can be shown that for pairwise graphical models this procedure is equivalent to the one given in~\cite{Kolmogorov:TRWS:PAMI06}.

We use the same procedure in the backward pass. We observed that a forward pass usually
produces the same labeling as the previous forward pass (and similarly for backward passes),
but forward and backward passes often given different results.
Accordingly, we run this extraction procedure every third iteration in both passes,
and keep track of the best solution found so far. (We implemented a similar procedure for MPLP, but it performed worse
than the method in~\cite{MPLPcode} - see Fig.~\ref{fig:plots}(a,g).)

\myparagraph{Order of processing factors} An important question is how to
choose the order $\preceq$ on factors in $\calS$. Assume that nodes in $V$ are totally ordered: $V=\{1,\ldots,n\}$.
We used the following rule for factors $\alpha,\beta\subseteq V$ proposed in~\cite{GTRWS:arXiv12}:
(i) first sort by the minimum node in $\alpha$ and $\beta$;
(ii) if $\min \alpha = \min \beta$ then sort by the maximum node.
For the remaining cases we added some arbitrarily chosen rules.

Thus, the only parameter to SRMP is the order of nodes. The choice of this order is an
important issue which is not addressed in this paper. Note, however, that
in many applications there is a natural order on nodes which often works well.
In all of our tests we processed the nodes in the order they were given.
%This is what what we used in our tests.

%\myparagraph{Generalized arc consistency} % \label{sec:Jconsistency}
%It is known that fixed points of the MSD algorithm on graph $(\calF,J)$ are characterized by the {\em $\mbox{$J$-consistency}$}
%condition, also called {\em generalized arc consistency} (w.r.t.\ $J$)~\cite{Werner:PAMI10}. We now
%show a similar property for SRMP.
%\begin{theorem}
%Applying SRMP to vector $\theta$ will increase the lower bound after a finite number of steps iff
%$\theta$ is $J$-consistent.
%\end{theorem}
%A proof is given in the suppl. material. (In fact, we prove it for a slightly
%larger family of algorithms.)

\section{$J$-consistency and convergence properties}\label{sec:Jconsistency}
It is known that fixed points of the MSD algorithm on graph $(\calF,J)$ are characterized by the {\em local arc consistency
} condition w.r.t.\ $J$, or {\em $\mbox{$J$-consistency}$} for short.
%condition, also called {\em generalized arc consistency} (w.r.t.\ $J$)~\cite{Werner:PAMI10}. 
In this section
we show a similar property for SRMP.

%Let $\calX_\alpha$ be the set of all possible labelings of factor $\alpha\in\calF$.
We will work with relations $\calR_\alpha\subseteq\calX_\alpha$.
For two relations $\calR_\alpha,\calR_{\alpha'}$ of factors $\alpha,\alpha'$ 
with $\alpha\subset \alpha'$ or $\alpha\supset\alpha'$ we denote
%In this section we analyze the stopping condition of SRMP. For the MSD algorithm 
%
\begin{eqnarray}
\pi_{\alpha'}(\calR_\alpha)=\{\bx_{\alpha'}\:|\:\exists \bx_\alpha\in\calR_\alpha\mbox{ s.t. }\bx_{\alpha'}\sim\bx_\alpha\}
\end{eqnarray}
%We will use this definition for the cases when $\alpha'\subset\alpha$ and $\alpha'\supset\alpha$; in the former
%case $\pi_{\alpha'}(\calR_\alpha)$ is 
If $\alpha'\subset\alpha$ then $\pi_{\alpha'}(\calR_\alpha)$ is 
usually called a {\em projection} of $\calR_\alpha$ to $\alpha'$.
For a vector $\theta_\alpha=(\theta_\alpha(\bx_\alpha)\:|\:\bx_\alpha\in \calX_\alpha)$ we define relation $\langle\theta_\alpha\rangle\subseteq\calX_\alpha$ via
\begin{eqnarray}
\langle\theta_\alpha\rangle=\arg\min_{\bxS_\alpha} \theta_\alpha(\bx_\alpha)
\end{eqnarray}

\begin{definition} Vector $\theta$ is said to be {$J$-consistent}  
if there exist non-empty relations $(\calR_\beta\subseteq\langle\theta_\beta\rangle\:|\:\beta\in\calF)$ such that
$\pi_\beta (\calR_\alpha ) = \calR_\beta$ for each $(\alpha,\beta)\in J$.
\end{definition}

The main result of this section is the following theorem; it shows that $J$-consistency is a natural stopping criterion for SRMP.
%(In fact, the proof will be applicable not only to SRMP but to some other sequen

\begin{theorem}
Let $\theta^t=\bar\theta[m^t]$ be the vector produced after $t$ iterations of the SRMP algorithm, with $\theta^0=\bar\theta$.
Then the following holds for $t>0$. \footnote{The condition $t>0$ is added
since updates in the very first iteration of SRMP are not equivalent to AMSD updates, as discussed in the previous section.} \\
(a) If $\theta^t$ is $J$-consistent then $\Phi(\theta^{t'}) \!=\! \Phi(\theta^{t})$ for all $t'\!>\!t$.\\
(b) If $\theta^t$ is not $J$-consistent then $\Phi(\theta^{t'}) > \Phi(\theta^{t})$ for some $t'>t$.\\
(c) If $\theta^\ast$ is a limit point of the sequence $(\theta^t)_t$ then $\theta^\ast$ is $J$-consistent
(and also $\lim_{t\rightarrow\infty}\Phi(\theta^t)=\Phi(\theta^\ast)$).
\label{th:convergence:main}
\end{theorem}

\noindent{\bf Remark 3~}
{\em 
Note that the sequence $(\theta^t)_t$ has at least one limit point $\theta^\ast$ if, for example, vectors $\theta^t$ are bounded.
We conjecture that these vectors always stay bounded, but leave this as an open question.}

\noindent{\bf Remark 4~}
{\em
For other message passing algorithms such as MSD it was conjectured in~\cite{Werner:PAMI07}
that the messages $m^t$ converge to a fixed point $m^\ast$ for $t\rightarrow\infty$.
We would like to emhpasize that this is not the case for the SRMP algorithm, as discussed
in the previous section; in general, the messages would be different after backward and forward passes.
In this respect SRMP differs from other proposed message passing techniques such as MSD, TRW-S and MPLP.
However, a weaker convergence property given in Theorem \ref{th:convergence:main}(c) still holds.
(A similar property has been proven for the pairwise TRW-S algorithm~\cite{Kolmogorov:TRWS:PAMI06},
except that we do not prove that the vectors stay bounded.)
}

The remainder of this section is devoted to the proof of Theorem~\ref{th:convergence:main}.
The proof will be applicable not just to SRMP, but to other sequences of updates ${\tt AMSD}(\beta,\IN\beta,\omega)$
that satisfy certain conditions. The first condition is that the updates consist of the same iteration
that is repeatedly applied infinitely many times, and this iteration visits each factor $\beta\in\calF$ with  $\IN\beta\ne\varnothing$ at
least once. The second condition concerns zero components of distributions $\omega$;
it will hold, in particular, if there are no such components. Details are given below.

\subsection{Proof of Theorem~\ref{th:convergence:main}(a)}
The statement is a special case of the following well-known fact: if $\theta$ is $J$-consistent
then applying any number of tree-structured block-coordinate ascent steps (such as AMSD and AMPLP) will
not increase the lower bound. For completeness, a proof of this fact is given below. 
%The following fact can be easily shown, using arguments similar to those used in
%the proofs of Propositions \ref{prop:AMSD} and \ref{prop:AMPLP}.
\begin{proposition}
Suppose that $\theta$ is $J$-consistent with relations $(\calR_\beta\subseteq\langle\theta_\beta\rangle\:|\:\beta\in\calF)$,
and $J'\subseteq J$ is a subset of edges such that graph $(\calF,J')$ is a forest.
Applying a block-coordinate ascent step (w.r.t. $J'$) to $\theta$ preserves $J$-consistency
(with the same relations $(\calR_\beta\:|\:\beta\in\calF)$) and does not change the lower bound $\Phi(\theta)$. 
%then applying AMSD or AMPLP updates from Fig.~\ref{fig:updates} to $\theta$ preserves $J$-consistency
%(with the same relations $(\calR_\beta\:|\:\beta\in\calF)$) and does not change the lower bound $\Phi(\theta)$.
\label{prop:JisLocalMin}
\end{proposition}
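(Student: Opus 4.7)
The plan is as follows. For each $\bar\bx_\beta\in\calR_\beta$, I will build a ``selection'' $\bx^{\ast}=(\bx^{\ast}_\alpha\:|\:\alpha\in\calF)$ with $\bx^{\ast}_\beta=\bar\bx_\beta$, $\bx^{\ast}_\alpha\in\calR_\alpha$ for every $\alpha$, and $\bx^{\ast}_\alpha\sim\bx^{\ast}_\gamma$ for every edge $(\alpha,\gamma)\in J'$. Because $J'\subseteq J$, the family $(\calR_\alpha)$ is in particular $J'$-consistent, and because $(\calF,J')$ is a forest there are no cycles to obstruct the construction: do a tree traversal starting at $\beta$, setting $\bx^{\ast}_\gamma:=\bx^{\ast}_\alpha|_\gamma\in\pi_\gamma(\calR_\alpha)=\calR_\gamma$ whenever a child $\gamma$ is discovered through a parent $\alpha$, and picking any $\bx^{\ast}_\alpha\in\calR_\alpha$ with $\bx^{\ast}_\alpha|_\gamma=\bx^{\ast}_\gamma$ whenever a parent $\alpha$ is discovered through a child $\gamma$ (which exists because $\pi_\gamma(\calR_\alpha)=\calR_\gamma$). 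For factors unreachable from $\beta$ in $(\calF,J')$, any element of $\calR_\alpha$ works. Since each $\bx^{\ast}_\alpha\in\calR_\alpha\subseteq\langle\theta_\alpha\rangle$ attains $\min\theta_\alpha$, we have $\sum_{\alpha}\theta_\alpha(\bx^{\ast}_\alpha)=\Phi(\theta)$.

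Now let $\theta'$ be the reparameterization produced by the block-coordinate ascent step, and let $\Delta m_{\alpha\gamma}$ denote the induced change in each message; by hypothesis $\Delta m_{\alpha\gamma}\equiv 0$ for $(\alpha,\gamma)\notin J'$. Substituting into~\eqref{eq:rep} and summing over $\alpha\in\calF$, each edge $(\alpha,\gamma)\in J'$ contributes $\Delta m_{\alpha\gamma}(\bx^{\ast}_\gamma)-\Delta m_{\alpha\gamma}(\bx^{\ast}_\alpha|_\gamma)$ to $\sum_\alpha[\theta'_\alpha(\bx^{\ast}_\alpha)-\theta_\alpha(\bx^{\ast}_\alpha)]$, and the compatibility $\bx^{\ast}_\alpha|_\gamma=\bx^{\ast}_\gamma$ wipes out every such term. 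Hence $\sum_\alpha\theta'_\alpha(\bx^{\ast}_\alpha)=\Phi(\theta)$, giving $\Phi(\theta')\le\sum_\alpha\theta'_\alpha(\bx^{\ast}_\alpha)=\Phi(\theta)$. Combined with the block-coordinate ascent inequality $\Phi(\theta')\ge\Phi(\theta)$, this yields $\Phi(\theta')=\Phi(\theta)$.

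Equality in $\Phi(\theta')=\sum_\alpha\theta'_\alpha(\bx^{\ast}_\alpha)$ then forces $\bx^{\ast}_\alpha\in\langle\theta'_\alpha\rangle$ for every $\alpha$; in particular $\bar\bx_\beta=\bx^{\ast}_\beta\in\langle\theta'_\beta\rangle$. Varying $\bar\bx_\beta$ over $\calR_\beta$ gives $\calR_\beta\subseteq\langle\theta'_\beta\rangle$ for each $\beta\in\calF$, and since the projection identities $\pi_\beta(\calR_\alpha)=\calR_\beta$ depend only on the relations themselves, $\theta'$ inherits $J$-consistency with the same family $(\calR_\beta)$. The only delicate point is the construction of $\bx^{\ast}$ in the first paragraph: it is exactly there that both hypotheses---the compatibility furnished by $J$-consistency and the acyclicity of $(\calF,J')$---are jointly used to guarantee that the tree extension never produces a conflicting assignment. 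Everything else reduces to the routine telescoping identity built into the reparameterization formula.
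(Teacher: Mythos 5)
Your proof is correct, but it takes a genuinely different route from the paper's. The paper constructs a \emph{fractional} primal certificate: a feasible $\mu\in\calL(J')$ with $supp(\mu_\alpha)=\calR_\alpha$, built by propagating marginal distributions over the forest, and then invokes LP duality and complementary slackness to conclude that the current messages are dual-optimal, hence any ascent step preserves optimality and the slackness conditions. You instead build, for each $\bar\bx_\beta\in\calR_\beta$, an \emph{integral} locally consistent certificate $(\bx^\ast_\alpha)_\alpha$ supported on the relations, and close the argument with the telescoping identity of the reparameterization \eqref{eq:rep} plus the ascent inequality --- no LP duality at all. Both proofs use the forest hypothesis and the projection identities $\pi_\beta(\calR_\alpha)=\calR_\beta$ in exactly the same place (to extend an assignment edge by edge without conflicts); yours is more elementary and self-contained, and also avoids the paper's preliminary reduction to $J=J'$ since edges outside $J'$ simply contribute $\Delta m_{\alpha\gamma}\equiv 0$ to the telescoping sum. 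What the paper's version buys is the additional structural fact that $m$ is an optimal dual vector for the forest-restricted LP, which situates the proposition within the primal--dual picture used elsewhere. One small slip to repair: for components of $(\calF,J')$ not containing $\beta$ it is \emph{not} enough to pick arbitrary elements of the relations independently --- if such a component contains edges of $J'$, the telescoping terms on those edges would not cancel; you must run the same rooted tree-traversal construction in each of those components (with an arbitrary root and an arbitrary seed in its relation, which exists since the relations are non-empty). With that one-line fix the argument is complete.
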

\begin{proof}
We can assume w.l.o.g.\ that $J=J'$: removing edges in $J-J'$ does not affect the claim.
%Also, we can assume that graph $(\calF,J)$ is connected, i.e.\ a tree: again, removing
%factors without incident edges does not affect the claim.

Consider LP relaxation~\eqref{eq:LPprimal}. 
We claim that there exists a feasible vector $\mu$ such that $supp(\mu_\alpha)=\calR_\alpha$
for all $\alpha\in\calF$, where $supp(\mu_\alpha)=\{\bx_\alpha\:|\:\mu_\alpha(\bx_\alpha)>0\}$
is the support of probability distribution $\mu_\alpha$.
Such vector can be constructed as follows. First, for each connected component of $(\calF,J)$
we pick an arbitrary factor $\alpha$ in this component and choose some distribution $\mu_\alpha$
with $supp(\mu_\alpha)=\calR_\alpha$ (e.g.\ a uniform distribution over $\calR_\alpha$).
Then we repeatedly choose an edge $(\alpha,\beta)\in J$
with exactly one ``assigned'' endpoint, and choose a probability distribution
for the other endpoint. Namely, if $\mu_\alpha$ is assigned then set $\mu_\beta$ via
 $\mu_\beta(\bx_\beta)=\sum_{\bxS_\alpha\sim\bxS_\beta} \mu_\alpha(\bx_\alpha)$;
we then have $supp(\mu_\beta)=\pi_\beta(supp(\mu_\alpha))=\pi_\beta(\calR_\alpha)=\calR_\beta$.
%Note, condition $\pi_\beta(\calR_\alpha)=\calR_\beta$ implies that $supp(\mu_\beta)=\calR_\beta$
If $\mu_\beta$ is assigned then we choose some probability distribution $\hat\mu_\alpha$
with $supp(\hat\mu_\alpha)=\calR_\alpha$, compute its marginal probability
$\hat\mu_\beta(\bx_\beta)=\sum_{\bxS_\alpha\sim\bxS_\beta} \hat\mu_\alpha(\bx_\alpha)$
 and then set $\mu_\alpha(\bx_\alpha)=\frac{\mu_\beta(\bxS_\beta)}{\hat\mu_\beta(\bxS_\beta)}\hat\mu_\alpha(\bx_\alpha)$ for
labelings $\bx_\alpha\in\calR_\alpha$; for other labelings $\mu_\alpha(\bx_\alpha)$ is set to zero.
The fact that $\pi_\beta(\calR_\alpha)=\calR_\beta$ implies that $\mu_\alpha$ is a valid probability distribution with $supp(\mu_\alpha)=\calR_\alpha$. The claim is proved.

Using standard LP duality for~\eqref{eq:LPprimal}, it can be checked that condition $\calR_\alpha\subseteq\langle\theta_\alpha\rangle$
for all $\alpha\in\calF$
is equivalent to the complementary slackness conditions for vectors $\mu$ and $\theta=\bar\theta[m]$ 
(where $\mu$ is the vector constructed above and $m$ is the vector of messages corresponding to $\theta$).
Therefore, $m$ is an optimal dual vector for~\eqref{eq:LPprimal}. This means that applying a block-coordinate ascent
step to $\theta=\bar\theta[m]$ results in a vector $\theta'=\bar\theta[m']$ which is optimal as well: $\Phi(\theta')=\Phi(\theta)$.
The complementary slackness conditions must hold for $\theta'$, so $\calR_\alpha\subseteq\langle\theta'_\alpha\rangle$ 
for all $\alpha\in\calF$.
\end{proof}
%A proof is given is Appendix~\ref{sec:JisLocalMin}.
%In this section we show that if $\theta$ is not $J$-consistent then SRMP will increase the lower bound $\Phi(\theta)$;
%this will imply that $J$-consistency is a natural stopping criterion for SRMP. 

\subsection{Proof of Theorem~\ref{th:convergence:main}(b,c)}
Consider a sequence of AMSD updates from Fig.~\ref{fig:updates} where $\INp\beta=\IN{\beta}$.
% updates ${\tt AMSD}(\beta,\IN\beta,\omega)$ for factors $\beta\in\calF$.
One difficulty in the analysis
is that some components of distributions $\omega$ may be zeros.
We will need to impose some restrictions on such components. Specifically, we will require the following:
\begin{itemize}
\item[R1] {\em For each call ${\tt AMSD}(\beta,\IN\beta,\omega)$ with $\OUT{\beta}\ne\varnothing$ there holds $\omega_\beta>0$. }
\item[R2] {\em Consider a call ${\tt AMSD}(\beta,\IN\beta,\omega)$ with $(\alpha,\beta)\in\IN\beta$, $\omega_{\alpha\beta}=0$.
This call ``locks'' factor $\alpha$, i.e.\ this factor and its children (except for $\beta$) cannot
be processed anymore until it is ``unlocked'' by calling ${\tt AMSD}(\beta,\IN\beta,\omega')$ with $\omega'_{\alpha\beta}>0$. }
\item[R3] {\em The updates are applied in iterations, where each iteration calls  ${\tt AMSD}(\beta,\IN\beta,\omega)$
for each $\beta\in \calF$ with $\IN\beta\ne\varnothing$ at least once.}
\end{itemize}
Restriction R2 can also be formulated as follows. For each factor $\alpha\in\calF$ let us
keep a variable $\Gamma_\alpha\in\OUT{\alpha}\cup\{\varnothing\}$. In the beginning we set $\Gamma_\alpha:=\varnothing$ for all $\alpha\in\calF$,
and after calling ${\tt AMSD}(\beta,\IN\beta,\omega)$ we update these variables as follows:
for each $(\alpha,\beta)\in\IN\beta$ set $\Gamma_\alpha:=(\alpha,\beta)$ if $\omega_{\alpha\beta}=0$,
and $\Gamma_\alpha:=\varnothing$ otherwise. Condition R2 means that calling ${\tt AMSD}(\beta,\IN\beta,\omega)$
is possible only if (i) $\Gamma_\beta=\varnothing$, and (ii) for each $(\alpha,\beta)\in\IN\beta$
there holds $\Gamma_\alpha\in\{(\alpha,\beta),\varnothing\}$.

It can be seen that the sequence of updates in SRMP (starting with the second pass)
satisfies conditions of the theorem; a proof of this fact is similar to that of Proposition~\ref{prop:SRMPfwbw}.

\begin{theorem}
Consider a sequence of AMSD updates satisfying conditions R1-R3.
% and for each $\beta\in \calF$ with $\IN\beta\ne\varnothing$ there are infinitely many updates ${\tt AMSD}(\beta,\IN\beta,\omega)$
%in the sequence. 
If vector $\theta^\circ$ is not $J$-consistent then applying the updates to $\theta^\circ$ will increase the lower bound after at most
$T=1+\sum_{\alpha\in\calF}|\calX_\alpha|+\sum_{(\alpha,\beta)\in J}|\calX_\beta|$ iterations.
% $T$,
%where $T$ depends on the size of the input problem but not on $\theta^\circ$.
\label{th:Jconsistency}
\end{theorem}

Theorem~\ref{th:Jconsistency} immediately implies part (b) of Theorem~\ref{th:convergence:main}.
Using an argument from~\cite{Kolmogorov:TRWS:PAMI06}, we can also prove part (c) as follows. 

Let $(\theta^{t(k)})_k$ be a subsequence of $(\theta^{t})_t$ such
that $\lim_{k\rightarrow\infty}\theta^{t(k)}=\theta^\ast$.
We have
\begin{equation}
\lim_{t\rightarrow\infty}\Phi(\theta^t)=
\lim_{k\rightarrow\infty}\Phi(\theta^{t(k)})=\Phi(\theta^\ast)
\label{eq:GKAUSFHKUAGFKAGAS}
\end{equation}
where the first equality holds since the sequence $(\Phi(\theta^t))_t$ is monotonic,
and the second equality is by continuity of function $\Phi:\Omega\rightarrow\mathbb R$,
where by $\Omega$ we denoted the space of vectors of the form $\theta=\bar\theta[m]$.

We need to show that $\theta^\ast$ is $J$-consistent. Suppose that this is not the case.
%Let $\Omega$ be the space of vectors of the form $\theta=\bar\theta[m]$.
Define
mapping $\pi:\Omega\rightarrow\Omega$ as follows: we take vector $\theta\in\Omega$ and apply $T$
iterations of SRMP to it. %Let $\tilde\theta^\ast=\pi(\theta)$. 
By
 Theorem \ref{th:Jconsistency} we have $\Phi(\pi(\theta^\ast))>\Phi(\theta^\ast)$.
%denote $\epsilon=\frac{1}{2}(\Phi(\pi(\theta^\ast))-\Phi(\theta^\ast))>0$.
Clearly,  $\pi$ and thus $\Phi\circ\pi$ are continuous mappings, therefore
\begin{equation*}
\lim_{k\rightarrow\infty}\Phi(\pi(\theta^{t(k)}))=\Phi\left(\pi \left(\lim_{k\rightarrow\infty}\theta^{t(k)}\right)\right)=\Phi(\pi(\theta^\ast)) 
%\lim_{k\rightarrow\infty}\Phi(\pi(\theta^{t(k)}))&=&\Phi(\pi(\theta^\ast))
\end{equation*}
This implies that $\Phi(\pi(\theta^{t(k)}))>\Phi(\theta^\ast)$ for some index $k$.
Note that $\pi(\theta^{t(k)})=\theta^t$ for $t=T+t(k)$. We obtained that $\Phi(\theta^t)>\Phi(\theta^\ast)$ for some $t>0$;
this contradicts eq.~\eqref{eq:GKAUSFHKUAGFKAGAS} and monotonicity of the sequence $(\Phi(\theta^t))_t$.

We showed that Theorem~\ref{th:Jconsistency} indeed implies Theorem~\ref{th:convergence:main}(b,c).
 It  remains to prove Theorem~\ref{th:Jconsistency}.

\noindent{\bf Remark 5~}
{\em 
Note that while Theorem~\ref{th:convergence:main}(a,b) holds for any sequence of AMSD updates
satisfying conditions R1-R3, we believe that this is not the case for Theorem~\ref{th:convergence:main}(c).
If, for example, the updates for factors $\beta$ used varying distributions $\omega$ whose components $\omega_{\alpha\beta}$
 would tend to zero then the increase of the lower bound could become exponentially smaller with
each iteration,
and $\Phi(\theta^t)$ might not converge to $\Phi(\theta^\ast)$ for a $J$-consistent vector $\theta^\ast$.
In the argument above it was essential that the sequence of updates was repeating,
and the weights $\omega^+_{\alpha\beta}$, $\omega^-_{\alpha\beta}$ used in Algorithm~\ref{alg:SRMP:messages}
were kept constant.
}
%The remainder of this section is devoted to the proof of Theorem~\ref{th:Jconsistency}.
\subsection{Proof of Theorem~\ref{th:Jconsistency}}\label{sec:Jconsistency:proof}
The proof will be based on the following fact.
\begin{lemma}
Suppose that operation ${\tt AMSD}(\beta,\IN\beta,\omega)$ does not increase the lower bound.
Let $\theta$ and $\theta'$ be respectively the vector before and after the update, and
$\delta_{\alpha\beta}$, $\widehat\theta_\beta$ be the vectors defined in steps 1 and 2.
Then for any $(\alpha,\beta)\in\IN\beta$ there holds
\begin{subequations}
\begin{IEEEeqnarray}{rCll}
\langle\delta_{\alpha\beta}\rangle&=&\pi_\beta(\langle\theta_\alpha\rangle) & %\forall (\alpha,\beta)\in\IN\beta 
   \label{eq:lemmaUpdate:a} \\
\langle\widehat\theta_\beta\rangle&=&\langle\theta_\beta\rangle\cap\bigcap_{(\alpha,\beta)\in\IN\beta}\langle\delta_{\alpha\beta}\rangle 
   \label{eq:lemmaUpdate:b} \\
\langle\theta'_\alpha\rangle&=&\langle\theta_\alpha\rangle\cap\pi_\alpha(\langle\widehat\theta_\beta\rangle)  & \mbox{\!\!\!if }\omega_{\alpha\beta}>0 % \forall (\alpha,\beta)\in\IN\beta 
   \label{eq:lemmaUpdate:c}\\
\hspace{-20pt}\langle\theta'_\alpha\rangle\cap\pi_\alpha(\langle\widehat\theta_\beta\rangle)&=&\langle\theta_\alpha\rangle\cap\pi_\alpha(\langle\widehat\theta_\beta\rangle)  & \mbox{\!\!\!if }\omega_{\alpha\beta}=0 % \forall (\alpha,\beta)\in\IN\beta  
   \label{eq:lemmaUpdate:d} 
\end{IEEEeqnarray}
\end{subequations}
\label{lemma:AMSDstep}
\end{lemma}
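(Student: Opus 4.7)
The plan is to split {\tt AMSD} into its two steps and exploit the no-increase hypothesis to turn everything into elementary sum-of-minima reasoning. First I would check that step~2 is unconditionally bound-preserving: after step~1 we have $\min_{\bxS_\alpha\sim\bxS_\beta}[\theta_\alpha(\bx_\alpha)-\delta_{\alpha\beta}(\bx_\beta)]=0$ for every $(\alpha,\beta)\in\IN\beta$, so a one-line min-marginal computation gives $\min\theta'_\alpha=\omega_{\alpha\beta}\min\widehat\theta_\beta$ and $\min\theta'_\beta=\omega_\beta\min\widehat\theta_\beta$, which sum to $\min\widehat\theta_\beta$ because $\sum_\alpha\omega_{\alpha\beta}+\omega_\beta=1$. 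This matches the joint contribution to $\Phi$ immediately after step~1, so the hypothesis that the entire operation is bound-preserving becomes the single identity
\[
\min\widehat\theta_\beta \;=\; \min\theta_\beta \;+\; \sum_{(\alpha,\beta)\in\IN\beta}\min\delta_{\alpha\beta}.
\]

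Given this, (a) is immediate from the definition of $\delta_{\alpha\beta}$: $\bx_\beta$ minimizes $\delta_{\alpha\beta}$ iff some $\bx_\alpha\sim\bx_\beta$ attains $\min\theta_\alpha$, which is precisely the condition $\bx_\beta\in\pi_\beta(\langle\theta_\alpha\rangle)$. For (b), $\widehat\theta_\beta=\theta_\beta+\sum_\alpha\delta_{\alpha\beta}$ is a sum whose minimum equals the sum of the individual minima by the displayed identity, so $\bx_\beta$ attains $\min\widehat\theta_\beta$ iff it attains $\min\theta_\beta$ and every $\min\delta_{\alpha\beta}$ simultaneously.

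For (c) and (d) I would rewrite
\[
\theta'_\alpha(\bx_\alpha) \;=\; \bigl[\theta_\alpha(\bx_\alpha)-\delta_{\alpha\beta}(\bx_\beta)\bigr] \;+\; \omega_{\alpha\beta}\,\widehat\theta_\beta(\bx_\beta),
\]
where the bracketed term is always non-negative. If $\omega_{\alpha\beta}>0$ then a minimizer of $\theta'_\alpha$ must make both pieces attain their individual minima: the bracket must vanish, giving $\theta_\alpha(\bx_\alpha)=\delta_{\alpha\beta}(\bx_\beta)$, and $\bx_\beta$ must lie in $\langle\widehat\theta_\beta\rangle$. By (b) the latter forces $\delta_{\alpha\beta}(\bx_\beta)=\min\delta_{\alpha\beta}=\min\theta_\alpha$, so the bracket condition collapses to $\theta_\alpha(\bx_\alpha)=\min\theta_\alpha$, i.e.\ $\bx_\alpha\in\langle\theta_\alpha\rangle$; the converse direction runs the same argument in reverse, yielding (c). If $\omega_{\alpha\beta}=0$ then $\langle\theta'_\alpha\rangle$ is simply the zero set of the bracket; intersecting with $\pi_\alpha(\langle\widehat\theta_\beta\rangle)$ again forces $\delta_{\alpha\beta}(\bx_\beta)=\min\theta_\alpha$ via~(b), collapsing the bracket condition to $\bx_\alpha\in\langle\theta_\alpha\rangle$, which is exactly~(d).

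The only genuine subtlety, and in my view the crux of the argument, is the correct reading of the hypothesis. Once one recognises that step~2 is automatically bound-preserving, the ``no increase'' assumption reduces to the clean sum-of-minima identity above, and after that every claim follows from the elementary fact that a sum of terms attains its minimum exactly when each term does, together with the tautology $\pi_\alpha(\langle\widehat\theta_\beta\rangle)=\{\bx_\alpha:\bx_\beta\in\langle\widehat\theta_\beta\rangle\}$ that comes from $\beta\subset\alpha$.
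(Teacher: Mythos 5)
Your proof is correct and follows essentially the same route as the paper's: both reduce the no-increase hypothesis to the statement that $\min\widehat\theta_\beta$ splits as $\min\theta_\beta+\sum_{(\alpha,\beta)}\min\delta_{\alpha\beta}$, and then read off (a)--(d) from non-negativity of $\theta_\alpha-\delta_{\alpha\beta}$ and the fact that a sum attains its minimum exactly when each summand does. The only cosmetic difference is that the paper normalizes all minima to zero and argues with zero sets (invoking Proposition~\ref{prop:AMSD} to get $\Phi(\widehat\theta)\le\Phi(\theta')$), whereas you verify directly that step~2 preserves $\Phi$; the substance is the same.
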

\begin{proof}
Adding a constant to vectors $\theta_\gamma$ does not affect the claim,
so we can assume w.l.o.g.\ that $\min_{\bxS_\gamma}\theta_\gamma(\bx_\gamma)=0$ for all factors $\gamma$.
This means that $\langle\theta_\gamma\rangle=\{\bx_\gamma\:|\:\theta_\gamma(\bx_\gamma)=0\}$.
We denote $\widehat\theta$ to be the vector after the update in step 1.
By the assumption of this lemma and by Proposition \ref{prop:AMSD} we have $\Phi(\widehat\theta)\le \Phi(\theta')=\Phi(\theta)=0$.

Eq.~\eqref{eq:lemmaUpdate:a} follows directly from the definition of $\delta_{\alpha\beta}$.
Also, we have $\min_{\bxS_\beta}\delta_{\alpha\beta}(\bx_\beta)=\min_{\bxS_\alpha}\widehat\theta_\alpha(\bx_\alpha)=0$ for all $(\alpha,\beta)\in\IN\beta$.

By construction, $\widehat\theta_\beta=\theta_\beta+\sum_{(\alpha,\beta)\in\IN\beta} \delta_{\alpha\beta}$.
All terms of this sum are non-negative vectors, thus vector $\widehat\theta_\beta$ is also non-negative.
We must have $\min_{\bxS_\beta}\widehat\theta_\beta(\bx_\beta)=0$, otherwise we would have $\Phi(\widehat\theta)>\Phi(\theta)$ - a contradiction.
Thus,
\begin{IEEEeqnarray*}{rCl}
\langle\widehat\theta_\beta\rangle&=&\{\bx_\beta\:|\:\widehat\theta_\beta(\bx_\beta)\!=\!0\} \\
&=&\{\bx_\beta\:|\:\theta_\beta(\bx_\beta)\!=\!0\mbox{ \!~AND~\! }\delta_{\alpha\beta}(\bx_\beta)\!=\!0~~\forall(\alpha,\beta)\!\in\!\IN\beta\}
\end{IEEEeqnarray*}
which gives~\eqref{eq:lemmaUpdate:b}.

Consider $(\alpha,\beta)\in\IN\beta$. By construction, $\theta'_\alpha(\bx_\alpha)=\widehat\theta_\alpha(\bx_\alpha)+\omega_{\alpha\beta}\widehat\theta_\beta(\bx_\beta)$. We know that (i) vectors $\theta_\alpha$, $\widehat\theta_\alpha$ and $\widehat\theta_\beta$
are non-negative, and their minina is 0; (ii) there holds
$\bx_\beta\in\langle\widehat\theta_\beta\rangle\subseteq\langle\delta_{\alpha\beta}\rangle$,
so for each $\bx_\alpha$ with $\bx_\beta\in\langle\widehat\theta_\beta\rangle$ we have $\delta_{\alpha\beta}(\bx_\beta)=0$
and therefore $\widehat\theta_\alpha(\bx_\alpha)=\theta_\alpha(\bx_\alpha)$. These facts imply
\eqref{eq:lemmaUpdate:c} and \eqref{eq:lemmaUpdate:d}.
\end{proof}

%Define $T=1+\sum_{\alpha\in\calF}|\calX_\alpha|+\sum_{(\alpha,\beta)\in J}|\calX_\beta|$.
From now on we assume that applying $T$ iterations to vector $\theta^\circ$ does not increase the lower bound;
we need to show that $\theta^\circ$ is $J$-consistent.

Let us define relations $\calR_\alpha\subseteq\calX_\alpha$ for $\alpha\in\calF$ and $\calR_{\alpha\beta}\in\calX_\beta$
for $(\alpha,\beta)\in J$ using the following procedure.
In the beginning we set $\calR_\alpha=\langle\theta^\circ_\alpha\rangle$ for all $\alpha\in\calF$ and $\calR_{\alpha\beta}=\calX_\beta$ for all $(\alpha,\beta)\in J$.
After calling ${\tt AMSD}(\beta,\IN\beta,\omega)$ we update these relations as follows: 
\begin{itemize}
\item Set $\calR'_\beta:=\langle\widehat\theta_\beta\rangle$ where $\widehat\theta_\beta$ is the vector in step 2 of procedure ${\tt AMSD}(\beta,\IN\beta,\omega)$. \
\item For each $(\alpha,\beta)\in\IN\beta$ set $\calR'_{\alpha\beta}:=\langle\widehat\theta_\beta\rangle$.
If $\omega_{\alpha\beta}>0$ then set $\calR'_\alpha:=\langle\theta'_\alpha\rangle$ where $\theta'_\alpha$ is the vector after the update,
otherwise set $\calR'_\alpha:=\calR_\alpha\cap\pi_\alpha(\calR'_{\alpha\beta})$. 
\end{itemize}
Finally, we update $\calR_\beta:=\calR'_\beta$ and $\calR_{\alpha\beta}:=\calR'_{\alpha\beta}$.

%Note, by construction we always have $\calR_{\alpha\beta}=\calR_{\alpha'\beta}$ for $(\alpha,\beta),(\alpha',\beta)\in\IN\beta$.

\begin{lemma}
The following invariants are preserved for each $\alpha\in\calF$ during the first $T$ iterations: \\
(a) If $\OUT\alpha\ne\varnothing$ and $\Gamma_\alpha=\varnothing$ then $\calR_\alpha=\langle\theta_\alpha\rangle$. \\
(b) If $\OUT\alpha=\varnothing$ then either $\calR_\alpha=\langle\theta_\alpha\rangle$ or $\theta_\alpha={\bf 0}$. \\
(c) If $\Gamma_\alpha=(\alpha,\beta)$ then $\calR_\alpha=\langle\theta_\alpha\rangle\cap\pi_\alpha(\calR_{\alpha\beta})$. \\
% and $\min_{\bxS_\alpha\sim\bxS_\beta}\theta_\alpha(\bx_\alpha)=0$ for all $\bx_\beta$. \\
Also, for each $(\alpha,\beta)\in J$ the following is preserved: \\
(d)  $\calR_\beta\subseteq\calR_{\alpha\beta}$. If $\OUT\beta=\varnothing$ then $\calR_\beta=\calR_{\alpha\beta}$. \\
(e)  $\calR_\alpha\subseteq\pi_\alpha(\calR_{\alpha\beta})$. \\
Finally, relations $\calR_\alpha$ and $\calR_{\alpha\beta}$ either shrink or stay the same, i.e.\ they never acquire new elements.
\label{lemma:invariants}
\end{lemma}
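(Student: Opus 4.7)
The plan is to induct on the number of AMSD calls executed, verifying that each call preserves all five invariants. The base case is the initial configuration: (a), (b), (c), (e) are immediate from $\calR_\alpha=\langle\theta^\circ_\alpha\rangle$ and $\calR_{\alpha\beta}=\calX_\beta$, while the equality half of (d) for $\OUT\beta=\varnothing$ may fail at time zero. This is benign: such a $\beta$ has $\IN\beta\ne\varnothing$ (since $(\alpha,\beta)\in J$) and so $\beta\in\calS$, and a single invocation of $\mathrm{AMSD}(\beta,\IN\beta,\omega)$ simultaneously sets $\calR_\beta=\calR_{\alpha\beta}=\langle\widehat\theta_\beta\rangle$; this equality is then preserved because $\calR_\beta$ and $\calR_{\alpha\beta}$ are only touched by subsequent calls on $\beta$, which again set them together.

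For the inductive step, fix a call $\mathrm{AMSD}(\beta,\IN\beta,\omega)$ and use Lemma~\ref{lemma:AMSDstep} as the central tool. The key observations are the following. First, after the call $\theta'_\beta=\omega_\beta\widehat\theta_\beta$, so $\omega_\beta>0$ gives $\langle\theta'_\beta\rangle=\langle\widehat\theta_\beta\rangle=\calR'_\beta$; restriction R1 forces $\omega_\beta>0$ whenever $\OUT\beta\ne\varnothing$, which delivers (a) for $\beta$; in the leaf case we get either the first alternative of (b) (when $\omega_\beta>0$) or the second alternative $\theta'_\beta=\mathbf{0}$ (when $\omega_\beta=0$). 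Second, for a parent $\alpha$ with $\omega_{\alpha\beta}>0$, the update sets $\Gamma_\alpha:=\varnothing$ and $\calR'_\alpha:=\langle\theta'_\alpha\rangle$, matching (a) thanks to~\eqref{eq:lemmaUpdate:c}. Third, for a parent $\alpha$ with $\omega_{\alpha\beta}=0$, the update sets $\Gamma_\alpha:=(\alpha,\beta)$ and $\calR'_\alpha:=\calR_\alpha\cap\pi_\alpha(\calR'_{\alpha\beta})$. Restriction R2 guarantees $\Gamma_\alpha\in\{\varnothing,(\alpha,\beta)\}$ before the call, so by the inductive hypothesis (a) or (c) the pre-call value is $\calR_\alpha=\langle\theta_\alpha\rangle$ or $\calR_\alpha=\langle\theta_\alpha\rangle\cap\pi_\alpha(\calR_{\alpha\beta})$; combined with monotonicity $\calR'_{\alpha\beta}\subseteq\calR_{\alpha\beta}$ and~\eqref{eq:lemmaUpdate:d}, both cases yield $\calR'_\alpha=\langle\theta'_\alpha\rangle\cap\pi_\alpha(\calR'_{\alpha\beta})$, which is (c).

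Invariants (d) and (e) for edges incident to the updated $\beta$ are easy because $\calR'_\beta=\calR'_{\alpha\beta}=\langle\widehat\theta_\beta\rangle$: the $\subseteq$ in (d) is actually an equality, the ``$\OUT\beta=\varnothing$'' clause is automatic, and (e) reads $\calR'_\alpha\subseteq\pi_\alpha(\langle\widehat\theta_\beta\rangle)$, which follows from~\eqref{eq:lemmaUpdate:c} or~\eqref{eq:lemmaUpdate:d}. Factors and edges not touched by the call retain their invariants by the inductive hypothesis. Monotonicity reduces to a routine chain: $\calR'_\beta=\langle\widehat\theta_\beta\rangle\subseteq\langle\theta_\beta\rangle\subseteq\calR_\beta$, using \eqref{eq:lemmaUpdate:b} and the fact (guaranteed by R2) that $\Gamma_\beta=\varnothing$ pre-call so that $\calR_\beta$ equals $\langle\theta_\beta\rangle$ (case (a)) or contains it via $\theta_\beta=\mathbf{0}$ giving $\calR_\beta=\calX_\beta$ (case (b)); then $\calR'_{\alpha\beta}=\langle\widehat\theta_\beta\rangle\subseteq\calR_\beta\subseteq\calR_{\alpha\beta}$ via the just-established shrinkage of $\calR_\beta$ together with pre-call (d).

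The main obstacle is the accounting around the $\Gamma$ variables: at every step one must consult R1 and R2 to know which of (a), (b), (c) applies to each relevant factor, and then verify that the $\Gamma$-transition prescribed by the update rule is consistent with the post-call invariant that $\calR'_\alpha$ should satisfy. Once that bookkeeping is in place, the remainder is a set-theoretic manipulation grounded entirely in Lemma~\ref{lemma:AMSDstep} and the monotonicity of the relations.
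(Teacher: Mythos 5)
Your overall strategy coincides with the paper's: induct over the sequence of {\tt AMSD} calls, use Lemma~\ref{lemma:AMSDstep} for the local set identities, and track the $\Gamma$-bookkeeping imposed by R1 and R2. However, there is a genuine gap in your monotonicity argument for $\calR_\beta$. You claim the chain $\calR'_\beta=\langle\widehat\theta_\beta\rangle\subseteq\langle\theta_\beta\rangle\subseteq\calR_\beta$ always works, justifying the second inclusion in the case $\OUT\beta=\varnothing$, $\theta_\beta={\bf 0}$ by asserting that $\theta_\beta={\bf 0}$ gives $\calR_\beta=\calX_\beta$. That assertion is false: invariant (b) permits $\theta_\beta={\bf 0}$ while $\calR_\beta$ is a \emph{proper} subset of $\calX_\beta$ --- indeed, after the first call on $\beta$ with $\omega_\beta=0$ one has $\theta'_\beta={\bf 0}$ but $\calR'_\beta=\langle\widehat\theta_\beta\rangle$, which is generally much smaller than $\calX_\beta$. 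In that case $\langle\theta_\beta\rangle=\calX_\beta\not\subseteq\calR_\beta$ and your chain breaks. This is precisely the one delicate case of the whole monotonicity proof: the paper handles it by invoking the second part of (d) to reduce to showing $\langle\widehat\theta_\beta\rangle\subseteq\calR_{\alpha\beta}$, then exhibiting a parent $\alpha$ with $\Gamma_\alpha=\varnothing$ (which exists because the previous call on $\beta$ had $\omega_\beta=0$, hence some $\omega_{\alpha\beta}>0$), and chaining $\langle\widehat\theta_\beta\rangle\subseteq\langle\delta_{\alpha\beta}\rangle=\pi_\beta(\langle\theta_\alpha\rangle)=\pi_\beta(\calR_\alpha)\subseteq\calR_{\alpha\beta}$ via \eqref{eq:lemmaUpdate:b}, \eqref{eq:lemmaUpdate:a}, (a) and (e). Your proof needs this argument (or an equivalent one); without it the induction does not close.

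Two smaller points. First, you never argue monotonicity of $\calR_\alpha$ for parents with $\omega_{\alpha\beta}>0$; there $\calR'_\alpha=\langle\theta_\alpha\rangle\cap\pi_\alpha(\langle\widehat\theta_\beta\rangle)$ and one must combine $\langle\widehat\theta_\beta\rangle\subseteq\calR_{\alpha\beta}$ with the pre-call alternatives (a) or (c) to conclude $\calR'_\alpha\subseteq\calR_\alpha$; this monotonicity is itself used when you say untouched edges ``retain their invariants.'' Second, your observation that the equality half of (d) can fail at initialization is a legitimate subtlety that the paper's one-line base-case remark glosses over, and your repair (it is first established when $\beta$ is processed, and is never needed before $\theta_\beta$ has been zeroed) is sound.
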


\begin{proof}
Checking that properties (a)-(e) hold after initialization is straightforward.
Let us show that a call to procedure ${\tt AMSD}(\beta,\IN\beta,\omega)$ preserves them.
We use the notation as in Lemma~\ref{lemma:AMSDstep}. 
Similarly, we denote $\calR_\gamma,\calR_{\alpha\beta},\Gamma_\alpha$ and $\calR'_\gamma,\calR'_{\alpha\beta},\Gamma'_\alpha$
to be the corresponding quantities before and after the update.

%\noindent{\bf Monotonicity of $\calR_{\alpha\beta}$~~} Let us show that $\calR'_{\alpha\beta}=\langle\widehat\theta_\beta\rangle\subseteq\calR_{\alpha\beta}$
%for all $(\alpha,\beta)\in\IN\beta$. It suffices to show this just for a single edge $(\alpha,\beta)\in\IN\beta$ due
%to the property given immediately before Theorem~\ref{th:invariants}.
%Let us select an edge with $\Gamma_\alpha=\varnothing$ (at least one such edge always exists).
%We have $\langle\theta_\alpha\rangle=\calR_\alpha\subseteq\pi_\alpha(\calR_{\alpha\beta})$
%by (a) and (e). This implies that $\pi_\beta( \langle\theta_\alpha\rangle)\subseteq \calR_{\alpha\beta}$.
%Finally, $\langle\widehat\theta_\beta\rangle\subseteq \langle\delta_{\alpha\beta}\rangle=\pi_\beta( \langle\theta_\alpha\rangle)\subseteq \calR_{\alpha\beta}$ where we used \eqref{eq:lemmaUpdate:b} and \eqref{eq:lemmaUpdate:a}.

\noindent{\bf Monotonicity of $\calR_{\beta}$ and $\calR_{\alpha\beta}$}.~~ Let us show that
$\calR'_{\beta}=\langle\widehat\theta_\beta\rangle\subseteq\calR_\beta$ and also
 $\calR'_{\alpha\beta}=\langle\widehat\theta_\beta\rangle\subseteq\calR_{\alpha\beta}$
for all $(\alpha,\beta)\in\IN\beta$. 
By parts (a,b) of the induction hypothesis for factor $\beta$ two cases are possible:
\begin{itemize}
\item $\langle\theta_\beta\rangle=\calR_\beta$. Then 
$\langle\widehat\theta_\beta\rangle\subseteq\langle\theta_\beta\rangle=\calR_\beta\subseteq\calR_{\alpha\beta}$
 where the first inclusion is by \eqref{eq:lemmaUpdate:a} and the second inclusion is by the first part of (d).
\item $\OUT\beta=\varnothing$ and $\theta_\beta={\bf 0}$. By the second part of (d) we have $\calR_\beta=\calR_{\alpha\beta}$ for 
all $(\alpha,\beta)\in\IN\beta$, so we just need to show that $\langle\widehat\theta_\beta\rangle\subseteq\calR_\beta$.
There must exist $(\alpha,\beta)\in\IN\beta$ with $\Gamma_\alpha=\varnothing$. For such $(\alpha,\beta)$
we have $\langle\theta_\alpha\rangle=\calR_\alpha\subseteq\pi_\alpha(\calR_{\alpha\beta})$
by (a) and (e). This implies that $\pi_\beta( \langle\theta_\alpha\rangle)\subseteq \calR_{\alpha\beta}$.
Finally, $\langle\widehat\theta_\beta\rangle\subseteq \langle\delta_{\alpha\beta}\rangle=\pi_\beta( \langle\theta_\alpha\rangle)\subseteq \calR_{\alpha\beta}=\calR_{\beta}$ where we used \eqref{eq:lemmaUpdate:b}, \eqref{eq:lemmaUpdate:a} and the second part of (d).
\end{itemize}

\noindent{\bf Monotonicity of $\calR_{\alpha}$ for $(\alpha,\beta)\in\IN\beta$.}~~ Let us show
that $\calR'_\alpha\subseteq\calR_\alpha$. If $\omega_{\alpha\beta}=0$ then $\calR'_\alpha=\calR_\alpha\cap\pi_\alpha(\calR'_{\alpha\beta})$,
so the claim holds. Assume that $\omega_{\alpha\beta}>0$. By~\eqref{eq:lemmaUpdate:c}
we have 
$\calR'_\alpha=\langle\theta'_\alpha\rangle
=\langle\theta_\alpha\rangle\cap\pi_\alpha(\langle\widehat\theta_\beta\rangle)$.
We already showed that $\langle\widehat\theta_\beta\rangle=\calR'_{\alpha\beta}\subseteq\calR_{\alpha\beta}$,
therefore 
$\calR'_\alpha
\subseteq\langle\theta_\alpha\rangle\cap\pi_\alpha(\calR_{\alpha\beta})$.
By (a,c) we have either $\calR_\alpha=\langle\theta_\alpha\rangle$ or 
$\calR_\alpha
=\langle\theta_\alpha\rangle\cap\pi_\alpha(\calR_{\alpha\beta})$;
in each case $\calR'_{\alpha}\subseteq\calR_\alpha$.

To summarize, we showed monotonicity for all relations (relations that are not mentioned above do not change).

\noindent{\bf Invariants (a,b)}.~~ If $\omega_\beta=0$ then $\theta'_\beta={\bf 0}$ (and $\OUT\beta=\varnothing$ due to restriction R2). Otherwise 
$\langle\theta'_\beta\rangle=\langle\omega_\beta\widehat\theta_\beta\rangle=\langle\widehat\theta_\beta\rangle=\calR'_\beta$.
In both cases properties (a,b) hold for factor $\beta$ after the update.

Properties (a,b) also cannot become violated for factors $\alpha$ with $(\alpha,\beta)\in\IN\beta$.
Indeed, (b) does not apply to such factors, and (a) will apply only if $\omega_{\alpha\beta}>0$,
in which case we have $\calR'=\langle\theta'_\alpha\rangle$, as required. We proved that (a,b) are preserved
for all factors.

\noindent{\bf Invariant (c)}.~~ Consider edge $(\alpha,\beta)\in\IN\beta$ with $\Gamma'_\alpha=(\alpha,\beta)$
(i.e.\ with $\omega_{\alpha\beta}=0$). We need to show that $\calR'_\alpha=\langle\theta'_\alpha\rangle\cap\pi_\alpha(\calR'_{\alpha\beta})$.
By construction, $\calR'_{\alpha}=\calR_\alpha\cap\pi_\alpha(\calR'_{\alpha\beta})$,
and by~\eqref{eq:lemmaUpdate:d} we have
have $\langle\theta'_\alpha\rangle\cap\pi_\alpha(\calR'_{\alpha\beta})=\langle\theta_\alpha\rangle\cap\pi_\alpha(\calR'_{\alpha\beta})$
(note that $\calR'_{\alpha\beta}=\langle\widehat\theta_\beta\rangle$).
Thus, we need to show that $\calR_\alpha\cap\pi_\alpha(\calR'_{\alpha\beta})=\langle\theta_\alpha\rangle\cap\pi_\alpha(\calR'_{\alpha\beta})$.
Two cases are possible:
\begin{itemize}
\item $\Gamma_\alpha=\varnothing$. By (a) we have $\calR_\alpha=\langle\theta_\alpha\rangle$, which implies the claim.
\item $\Gamma_\alpha=(\alpha,\beta)$. By (c) we have $\calR_\alpha=\langle\theta_\alpha\rangle\cap\pi_\alpha(\calR_{\alpha\beta})$,
so we need to show that 
$\langle\theta_\alpha\rangle\cap\pi_\alpha(\calR_{\alpha\beta})\cap\pi_\alpha(\calR'_{\alpha\beta})=\langle\theta_\alpha\rangle\cap\pi_\alpha(\calR'_{\alpha\beta})$.
This holds since  $\calR'_{\alpha\beta}\subseteq\calR_{\alpha\beta}$.
% and therefore $\pi_\alpha(\calR_{\alpha\beta})\cap\pi_\alpha(\calR'_{\alpha\beta})=\pi_\alpha(\calR'_{\alpha\beta})$.
\end{itemize}

\noindent{\bf Invariants (d,e)}.~~ These invariants cannot become violated for edges $(\alpha',\beta')\in J$
with $\beta'\ne \beta$ since relation $\calR_{\alpha'\beta'}$ does not change and relations $\calR_{\alpha'},\calR_{\beta'}$ do not grow.
Checking that that (d,e) are preserved for edges $(\alpha,\beta)\in\IN\beta$ is straightforward.
We just discuss one case (all other cases follow directly from the construction).
Suppose that $\omega_{\alpha\beta}>0$. Then
 $\calR'_\alpha=\langle\theta'_\alpha\rangle=\langle\theta_\alpha\rangle\cap\pi_\alpha(\calR'_{\alpha\beta})$ (by~\eqref{eq:lemmaUpdate:c}); this 
implies that $\calR'_\alpha\subseteq\pi_\alpha(\calR'_{\alpha\beta})$, as desired.
\end{proof}

We are now ready to prove Theorem \ref{th:Jconsistency}, i.e.\ that vector $\theta^\circ$ is $J$-consistent.
As we showed, all relations never grow, so after fewer than $T$ iterations %(that depends only on the size of the input problem)
we will encounter an iteration during which the relations do not change.
% they will stabilize.
Let $(\calR_\alpha\:|\:\alpha\in\calF)$ and $(\calR_{\alpha\beta}\:|\:(\alpha,\beta)\in J)$ be the relations during this iteration.
There holds $\calR_\alpha\subseteq\langle\theta^\circ_\alpha\rangle$ for all $\alpha\in\calF$ (since we had equalities after initialization
and then the relations have either shrunk or stayed the same). 
Consider edge $(\alpha,\beta)\in J$.
At some point during the iteration we call {\tt AMSD}$(\beta,\IN\beta,\omega)$. % (after the relations have stabilized).
By analyzing this call we conclude that $\calR_{\alpha\beta}=\calR_\beta$. %, and this relation is non-empty since $\calR_\beta=\langle\widehat\theta_\beta\rangle$.
From~\eqref{eq:lemmaUpdate:a},~\eqref{eq:lemmaUpdate:b} and Lemma~\ref{lemma:invariants}(a) we get $\calR_\beta=\langle\widehat\theta_\beta\rangle\subseteq \langle\delta_{\alpha\beta}\rangle=\pi_\beta(\langle\theta_\alpha\rangle)=\pi_\beta(\calR_\alpha)$. From Lemma~\ref{lemma:invariants}(e) we obtain that $\calR_\alpha\subseteq\pi_\alpha(\calR_\beta)$.

We showed that $\calR_\beta\subseteq\pi_\beta(\calR_\alpha)$ and $\calR_\alpha\subseteq\pi_\alpha(\calR_\beta)$; this implies that $\calR_\beta=\pi_\beta(\calR_\alpha)$.
Finally, relation $\calR_\beta$ (and thus $\calR_\alpha$) is non-empty since $\calR_\beta=\langle\widehat\theta_\beta\rangle$.

\section{Experimental results}\label{sec:experiments}
%The focus of this paper is on coordinate-ascent message passing algorithms, which
%represent an important branch of MRF optimization techniques.
%Consequently, we restrict our tests to algorithms fro

\begin{figure*}[!t]
\footnotesize
\begin{tabular}{@{\hspace{0pt}}c}\vspace{-12pt} \\
\includegraphics[scale=0.4,trim=  0pt 136pt 300pt 0pt,clip=true]{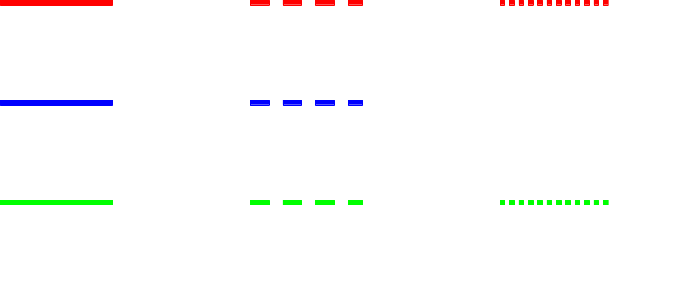} \vspace{-7pt} \\
\includegraphics[scale=0.4,trim=  0pt  89pt 300pt 47pt,clip=true]{legend.pdf} \vspace{-6pt} \\
\includegraphics[scale=0.4,trim=  0pt  40pt 300pt 96pt,clip=true]{legend.pdf} \vspace{0pt}
\end{tabular}
\begin{tabular}{@{\hspace{-5pt}}c}
lower bound. {\color{red}Red=SRMP},
{\color{blue}blue=CMP},
{\color{green}green=MPLP}.~~
{\color{black}Black in (b,c,e,f)=GTRW-S.}
\end{tabular}
\\
\begin{tabular}{@{\hspace{0pt}}c}\vspace{-12pt} \\
\includegraphics[scale=0.4,trim=120pt 136pt 180pt 0pt,clip=true]{legend.pdf} \vspace{-7pt} \\
\includegraphics[scale=0.4,trim=120pt  89pt 180pt 47pt,clip=true]{legend.pdf} \vspace{-6pt}  \\
\includegraphics[scale=0.4,trim=120pt  40pt 180pt 96pt,clip=true]{legend.pdf} \vspace{-0pt} 
\end{tabular}
\begin{tabular}{@{\hspace{-5pt}}c}
energy, method 1: compute solution during forward passes only by sending ``restricted'' messages (sec.~\ref{sec:SRMP})
\end{tabular}
\\
\begin{tabular}{@{\hspace{0pt}}c}\vspace{-12pt} \\
\includegraphics[scale=0.5,trim= 25pt 136pt  39pt 0pt,clip=true]{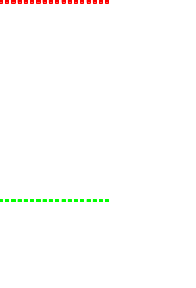} 
\end{tabular}
\begin{tabular}{@{\hspace{-5pt}}c}
\;\!energy, method 2: compute solution during both forward and backward passes of SRMP % (see sec.~\ref{sec:SRMP})
\end{tabular}
\\
\begin{tabular}{@{\hspace{0pt}}l}\vspace{-12pt} \\
\includegraphics[scale=0.5,trim= 25pt  40pt  39pt 96pt,clip=true]{legend3.pdf} \vspace{0pt}
\end{tabular}
\begin{tabular}{@{\hspace{-5pt}}c}
\!energy, method 2: solutions computed by the MPLP code \cite{MPLPcode} (only in (a,g))
\end{tabular}
\\

%%%%%%%%%%%%%%%%%%%%%%%%%%%%%%%%%%%%%%%%%%%%%%%%%%%%%%%%%%%%%%%%%%%%%%%%%%%%%%%%%%%%%%%%%%%%%%%%%%%%%%

\framebox{\small
\begin{tabular}{@{\hspace{0pt}}c@{\hspace{-22pt}}c@{\hspace{-16pt}}c@{\hspace{-19pt}}}
\hspace{-113pt}{\bf stereo} \vspace{-6pt} \\
\raisebox{5pt}{\includegraphics[width = 0.36 \textwidth]{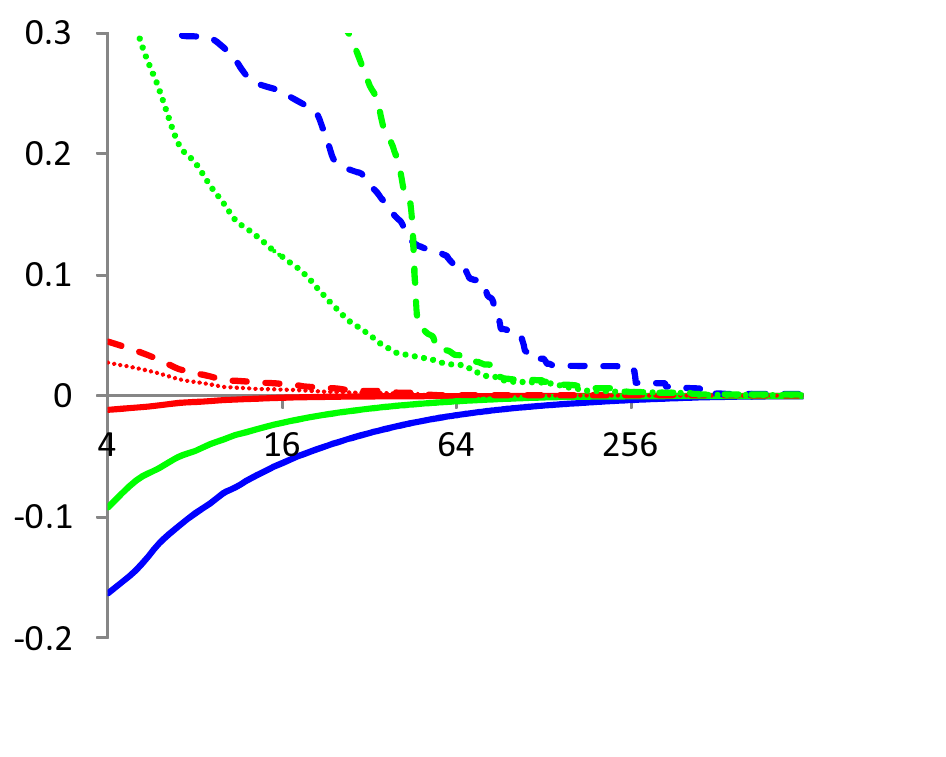}} 
\begin{picture}(0, 0) \put(-53,110){\tiny $2\rightarrow 1$} \end{picture}
\begin{picture}(0, 0) \put(-63,35){\tiny\bf {\color{red}1}/{\color{blue}1.03}/{\color{green}1.43}} \end{picture} &
\includegraphics[width = 0.36 \textwidth]{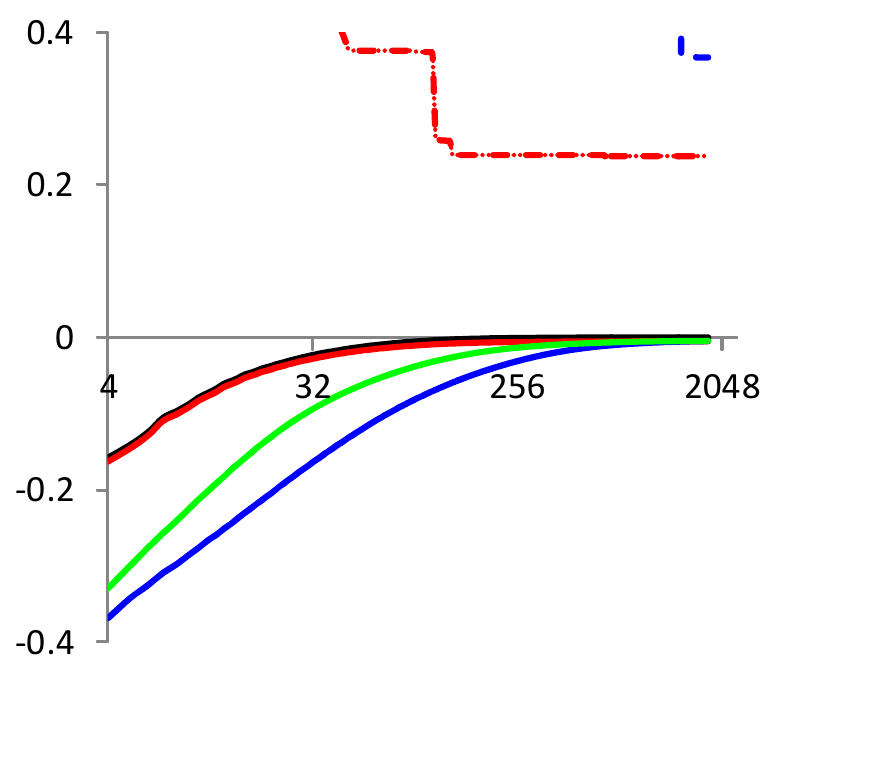} 
\begin{picture}(0, 0) \put(-70,110){\tiny $\{2,\!3\}\rightarrow 1$} \end{picture}
\begin{picture}(0, 0) \put(-63,35){\tiny\bf {\color{red}1}/{\color{blue}1.44}/{\color{green}2.21}} \end{picture} &
\includegraphics[width = 0.36 \textwidth]{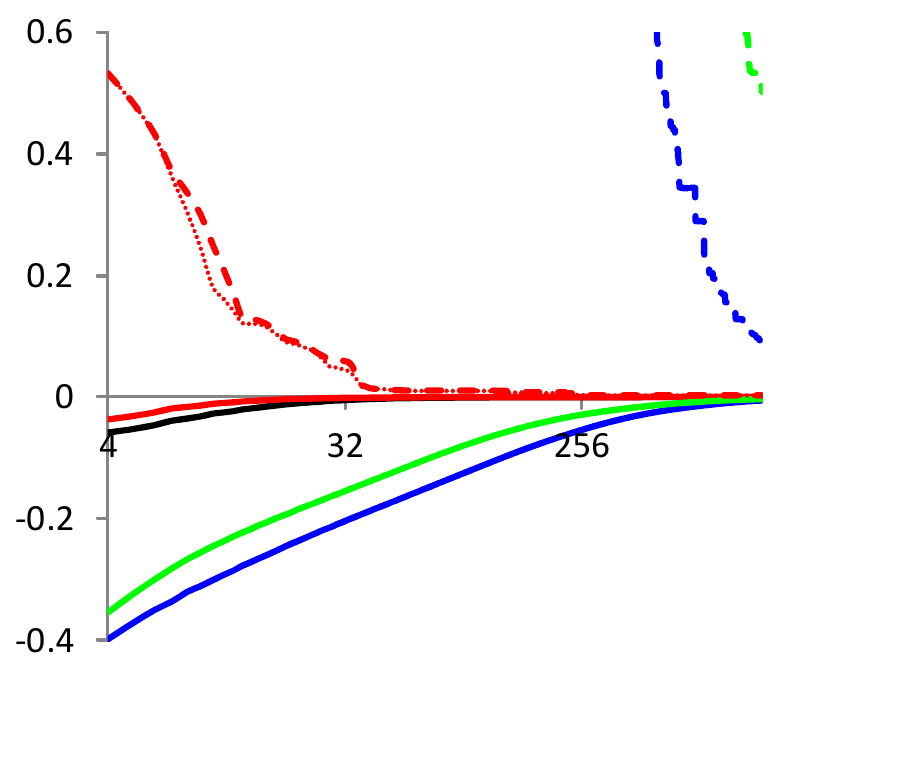} 
\begin{picture}(0, 0) \put(-105,110){\tiny $\{2,\!3\}\rightarrow\{1,\!2\}$} \end{picture}
\begin{picture}(0, 0) \put(-63,35){\tiny\bf {\color{red}1}/{\color{blue}1.25}/{\color{green}1.26}} \end{picture} 
\vspace{-30pt} \\
(a) Potts & (b) second-order, BLP &  (c) second-order 
\end{tabular} 
}
\framebox{\small
\begin{tabular}{@{\hspace{0pt}}c@{\hspace{-23pt}}c@{\hspace{-18pt}}c@{\hspace{-16pt}}}
\hspace{-60pt}{\bf image segmentation} \vspace{-8pt} \\
\includegraphics[width = 0.36 \textwidth]{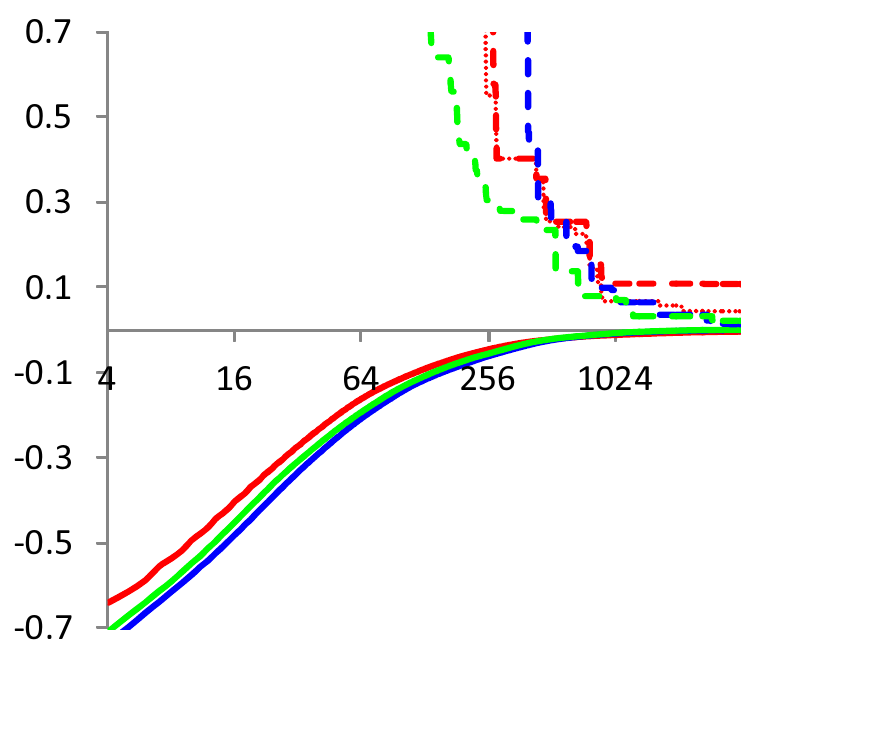}
\begin{picture}(0, 0) \put(-140,100){\tiny $\{2,\!3,\!4\}\!\!\rightarrow\!\!\{1,\!2,\!3\}$} \end{picture}
\begin{picture}(0, 0) \put(-63,35){\tiny\bf {\color{red}1}/{\color{blue}1.67}/{\color{green}1.94}} \end{picture} &
\includegraphics[width = 0.36 \textwidth]{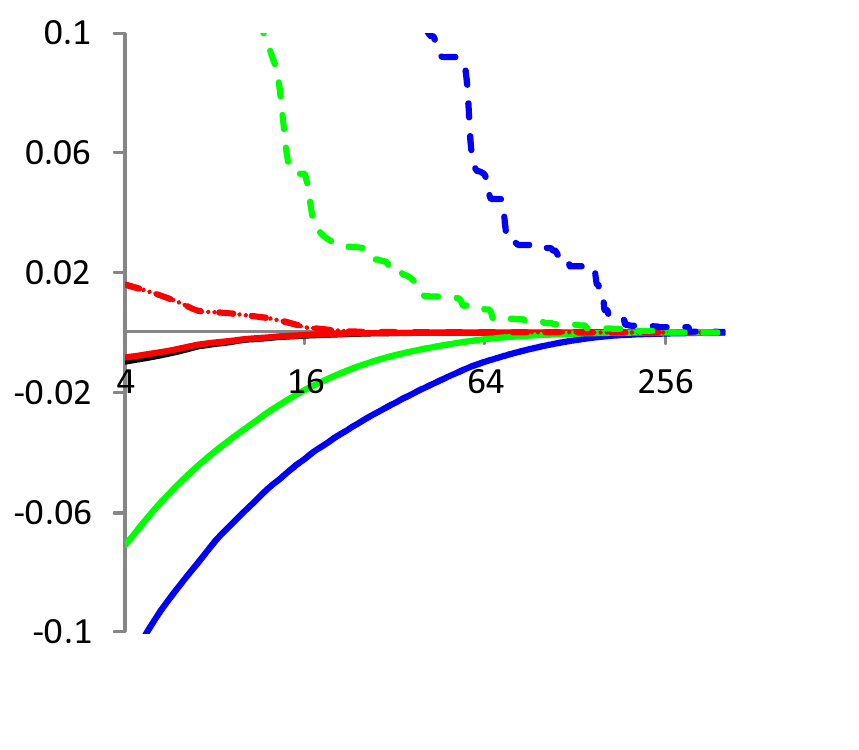} 
\begin{picture}(0, 0) \put(-53,110){\tiny $4\rightarrow 1$} \end{picture}
\begin{picture}(0, 0) \put(-63,35){\tiny\bf {\color{red}1}/{\color{blue}1.60}/{\color{green}3.35}} \end{picture} &
\includegraphics[width = 0.36 \textwidth]{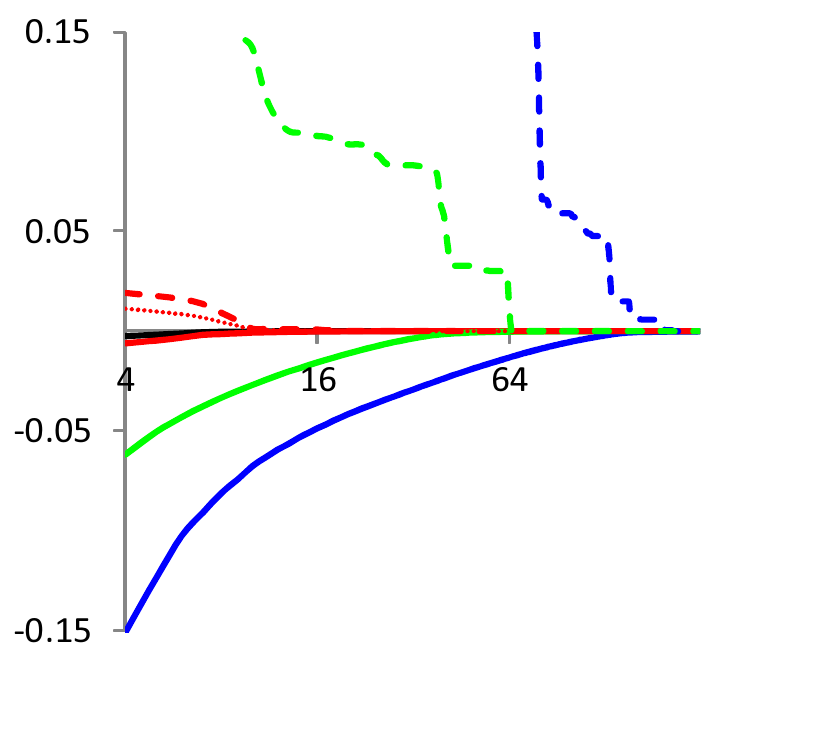} 
\begin{picture}(0, 0) \put(-60,130){\tiny $\{2,\!4\}\!\!\rightarrow\!\!\{1,\!2\}$} \end{picture}
\begin{picture}(0, 0) \put(-63,35){\tiny\bf {\color{red}1}/{\color{blue}1.57}/{\color{green}2.95}} \end{picture} 
\vspace{-30pt} \\
(d) curvature & (e) gen. Potts, BLP &  (f) gen. Potts 
\end{tabular} 
}
\framebox{\small
\begin{tabular}{@{\hspace{0pt}}c@{\hspace{-21pt}}c@{\hspace{-15pt}}c@{\hspace{-12pt}}}
\hspace{-109pt}{\bf proteins} \vspace{-13pt} \\
\raisebox{5pt}{\includegraphics[width = 0.36 \textwidth]{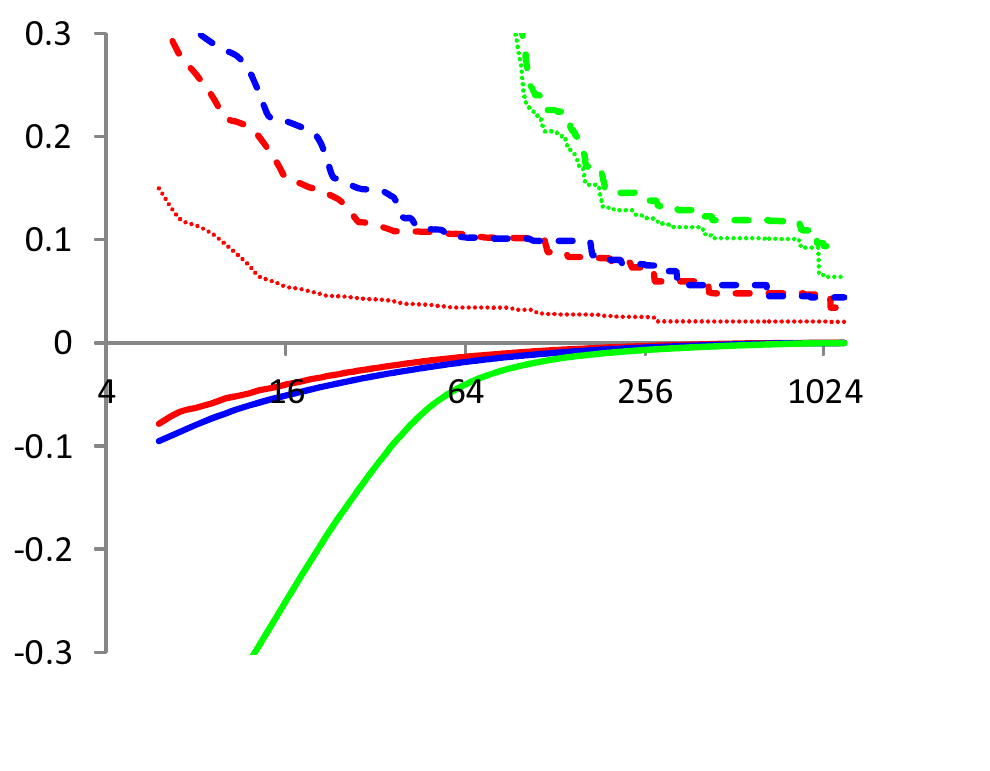}}
\begin{picture}(0, 0) \put(-55,120){\tiny $2\rightarrow 1$} \end{picture}
\begin{picture}(0, 0) \put(-63,35){\tiny\bf {\color{red}1}/{\color{blue}1.14}/{\color{green}0.88}} \end{picture} &
\includegraphics[width = 0.36 \textwidth]{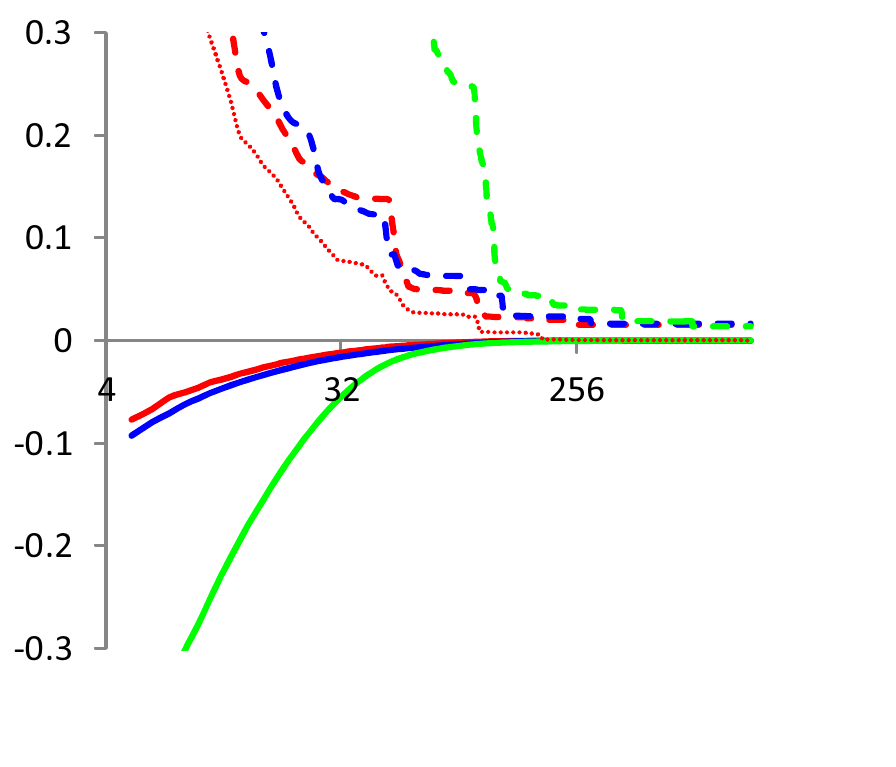} 
\begin{picture}(0, 0) \put(-75,120){\tiny $\{2,\!3\}\rightarrow\{1,\!2\}$} \end{picture}
\begin{picture}(0, 0) \put(-63,35){\tiny\bf {\color{red}1}/{\color{blue}1.25}/{\color{green}1.22}} \end{picture} &
\includegraphics[width = 0.34 \textwidth]{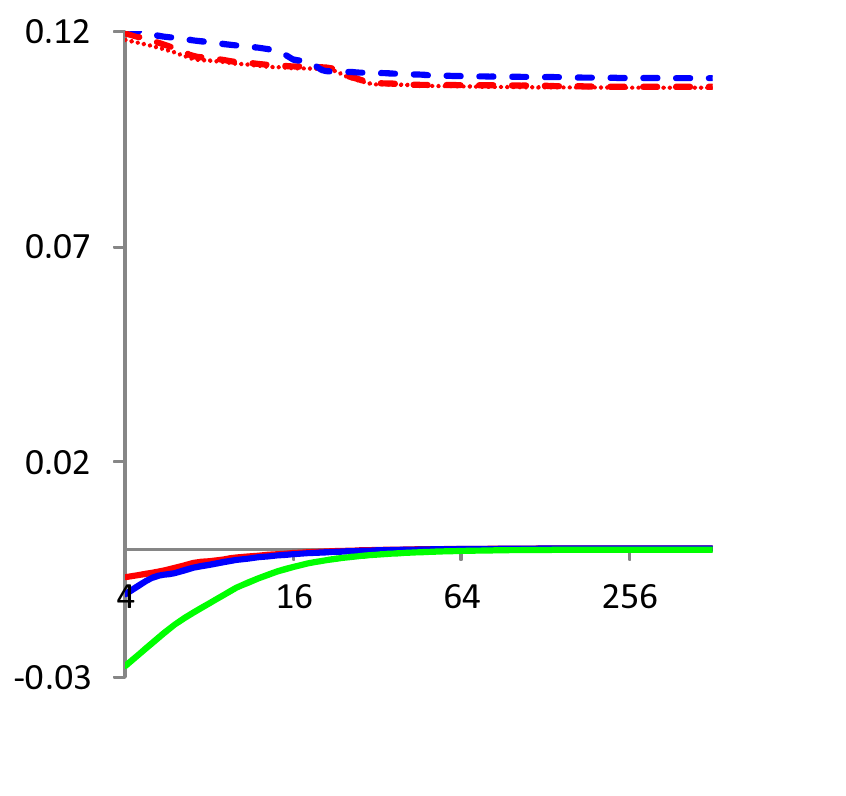} 
\begin{picture}(0, 0) \put(-65,115){\tiny $\{2,\!3\}\rightarrow 1$} \end{picture}
\begin{picture}(0, 0) \put(-63,65){\tiny\bf {\color{red}1}/{\color{blue}1.36}/{\color{green}2.24}} \end{picture} 
\vspace{-30pt} \\
(g) side chains & ~~~~~~~(h) side chains with triplets &  (k) protein interactions
\end{tabular} 
}
\caption{Average lower bound and energy vs. time. {\bf X axis}: SRMP iterations
in the log scale.
Notation {\bf {\color{red}1}/{\color{blue}$a$}/{\color{green}$b$}} means
that 1 iteration of SRMP takes the same time as $a$ iterations of CMP, or $b$ iterations of MPLP.
Note that an SRMP iteration has two passes (forward and backward), while CMP and MPLP iterations have only forward passes.
However in each pass SRMP updates only a subset of messages (half in the case of pairwise models). 
%so for some problems the runtimes per iteration are comparable. \\
%
{\bf Y axis}: lower bound/energy, normalized so that the initial lower bound (with zero messages) is $-1$,
and the best computed lower bound is $0$. 
%
%\\
Line $A\rightarrow B$ gives information about set $J$:\;\; %factor sizes:
 $A=\{|\alpha|\::\:(\alpha,\beta)\in J\}$,\;\; $B=\{|\beta|\::\:(\alpha,\beta)\in J\}$.
\vspace{-10pt}
}
\label{fig:plots}
\end{figure*}

In this section we compare SRMP, CMP (namely, updates from Fig.~\ref{fig:updates}(a) with the uniform distributions $\omega$) and MPLP.
Note that there are many other inference algorithms, see e.g.\ \cite{Kappes:CVPR13} for a recent comprehensive comparison.
Our goal is not to replicate this comparison but instead focus on techniques from the same family 
(namely, coordinate ascent message passing algorithms), since they represent an important branch of
MRF optimization algorithms.

We implemented the three methods in the same framework, trying to put
the same amount of effort into optimizing each technique. For MPLP we used 
equations given in Appendix~\ref{app:MPLPmessages}. On the protein and second-order stereo instances discussed below our implementation 
was about 10 times faster than the code in~\cite{MPLPcode}.\footnote{The code in~\cite{MPLPcode}
appears to use the same message update routines for all factors.
We instead implemented separate routines for different factors (namely Potts, general pairwise and higher-order)
as virtual functions in C++. In addition, we precompute the necessary tables for edges $(\alpha,\beta)\in J$
with $|\alpha|\ge 3$ to allow faster computation.
We made sure that our code produced the same lower bounds as the code in~\cite{MPLPcode}.
}
For all three techniques factors were processed in the order described in section~\ref{sec:SRMP}
(but for CMP and MPLP we used only forward passes).

Unless noted otherwise, graph $(\calF,J)$ was constructed as follows: (i) add to $\calF$ all possible intersections of existing factors;
(ii) add edges $(\alpha,\beta)$ to $J$ such that $\alpha,\beta\in\calF$, $\beta\subset\alpha$, and there is no ``intermediate'' factor $\gamma\in\calF$
with $\beta\subset\gamma\subset\alpha$. For some problems we also experimented with the BLP relaxation (defined in section \ref{sec:background});
although it is weaker in general, message passing operations can potentially be implemented faster for certain factors.
%In those cases $\calF$ was the set of {\em original} factors.

%Next, we describe the problems on which we tested the algorithms.
%We used energies of order 2,3 and 4. Note that for energies of order 2
%(i.e.\ for pairwise energies) SRMP is equivalent to TRW-S; we included
%this case for completeness.

\myparagraph{Instances} We used the data cited in \cite{MPLPdata} %\url{http://cs.nyu.edu/~dsontag/code/README_v2.html}
and a subset of data from~\cite{GTRWS:arXiv12}\footnote{We excluded instances with specialized high-order factors used in~\cite{GTRWS:arXiv12}.
They require customized message passing routines, which we have not implemented. 
As discussed in \cite{GTRWS:arXiv12}, for such energies MPLP has an advantage; SRMP and CMP
could be competitive with MPLP only if the factor allows efficient incremental updates
exploiting the fact that after sending message $\alpha\rightarrow\beta$ only message $m_{\alpha\beta}$ changes
for factor $\alpha$.}.
These are energies of order 2,3 and 4. Note that for energies of order 2
(i.e.\ for pairwise energies) SRMP is equivalent to TRW-S; we included
this case for completeness. The instances are summarized below.

\noindent {\bf (a)} Potts stereo vision. We took 4 instances from~\cite{MPLPdata}; each node has 16 possible labels. \\
{\bf (b,c)} Stereo with a second order smoothness prior~\cite{Woodford:CVPR08}. We used the version described in~\cite{GTRWS:arXiv12}; the energy has unary terms and ternary terms in horizontal and vertical directions.
We ran it on scaled-down stereo pairs ``Tsukuba'' and ``Venus'' (with 8 and 10 labels respectively). \\
{\bf (d)} Constraint-based curvature, as described in~\cite{GTRWS:arXiv12}. Nodes correspond to faces of the dual graph, and have 2 possible labels.
%The factors are of order 2,3 and 4. 
We used ``cameraman'' and ``lena'' images of size 64$\times$64. \\
{\bf (e,f)} Generalized Potts model with 4 labels, as described in~\cite{GTRWS:arXiv12}. The energy has unary terms and 4-th order terms
corresponding to 2$\times$2 patches. We used 3 scaled-down images (``lion'', ``lena'' and ``cameraman''). \\
{\bf (g)} Side chain prediction in proteins. We took 30 instances from  \cite{MPLPdata}. \\ %\url{http://cs.nyu.edu/~dsontag/data/sidechain_uai.tar.gz}.
{\bf (h)} A tighter relaxation
of energies in (g) obtained by adding zero-cost triplets of nodes to $\calF$.
These triplets were generated by the MPLP code~\cite{MPLPcode} that implements the
techniques in~\cite{Sontag:UAI2008,Sontag:tightening2}.
%(with these triplets the code was able solve all instances to optimality).
\footnote{The set of edges $J$ was set in the same way as in the code~\cite{MPLPcode}:
for each triplet $\alpha=\{i,j,k\}$ we add to $J$ edges from $\alpha$ to the factor $\{i,j\}$,
$\{j,k\}$, $\{i,k\}$. If one of these factors (say, $\{i,j\}$) is not present in
the original energy then we did {\bf not} add edges $(\{i,j\},\{i\})$ and $(\{i,j\},\{j\})$.}
\\
{\bf (k)} Protein-protein interactions, with binary labels (8 instances from \cite{PascalInferenceChallenge}).
%The energies have already been
%reparamerameterized with the method~\cite{Meshi:ECML11},
%which explains why range of the Y axis in Fig.~\ref{fig:plots}(k) is different from other plots.
We also tried reparameterized energies given in~\cite{MPLPdata};
the three methods performed very similarly (not shown here).

\myparagraph{Summary of results} From the plots in Fig.~\ref{fig:plots} we make the following
conclusions. On problems with a highly regular graph structure (a,b,c,e,f)
SRMP clearly outperforms CMP and MPLP. On the protein-related problems (g,h,k)
SRMP and CMP perform similarly, and outperform MPLP. On the remaining problem (d) the three techniques 
are roughly comparable, although SRMP converges to a worse solution. 
%A potential explanation
%is that in (d) graph $(\calF,J)$ has three layers.

%Also, in (k) the final lower bound of SRMP is marginally worse than that of MSD,

%MPLP performs worse than SRMP
%on all problems except (d), where it is comparable. One difference of (d) from other problems
%is that graph $(\calF,J)$ has three layers.
%
%SRMP clearly outperforms MSD on problems with a highly regular graph structure (a,b,c,e,f),
%and performs similarly on other problems (although in (d) SRMP converges to a worse solution).

%\myparagraph{Summary of results} SRMP clearly outperforms MSD and MPLP on instances
%with a highly regular graph structure (a,b,c,e,f). On the protein-related problems (g,h,k)
%SRMP and MSD clearly outperform MPLP. On the remaining problems (d) the three techniques 
%are roughly comparable, although SRMP converges to a worse solution.
%Also, in (k) the final lower bound of SRMP is marginally worse than that of MSD,
%mainly because SRMP appears to get stuck at a worse local minima on the ``protein6'' instance.
%Different behaviour in (d,k) could be due to the following factors: in (d) graph $(\calF,J)$ has three layers,
%while in (k) there seems to be no natural order on the nodes, and energies have
%already been reparameterized.

On a subset of problems (b,c,e,f) we also ran the GTRW-S code from~\cite{GTRWS:arXiv12},
and found that its behaviour is very similar to SRMP - see Fig.~\ref{fig:plots}.\footnote{In the plots (b,c,e,f) we
 assume that one iteration of SRMP takes the same time as one iteration of GTRW-S.
Note that in practice these times were different: SRMP iteration was about 50\% slower than
GTRW-S iteration on (e), but about 9 times faster on (f).}
We do not claim any speed improvement over GTRW-S; instead, the advantage of SRMP is a simpler 
and a more general formulation (as discussed in section~\ref{sec:SRMP}, we believe
that with particular weights SRMP becomes equivalent to GTRW-S).

\section{Conclusions}
We presented a new family of algorithms which includes CMP and TRW-S %~\cite{Kolmogorov:TRWS:PAMI06}
as special cases. The derivation of SRMP is shorter than that of TRW-S; this should facilitate generalizations
to other cases. We developed such a generalization for higher-order graphical models,
but we also envisage other directions. An interesting possibility is to treat
edges in a pairwise graphical model with different weights depending on their ``strengths'';
SRMP provides a natural way to do this. (In $\mbox{TRW-S}$ modifying a weight of 
an individual edge is not easy: these weights depend on probabilities of monotonic
chains that pass through this edge, and changing them would affect other edges as well.)
In certain scenarios it may be desirable to perform updates only in certain parts of
the graphical model (e.g.\ to recompute the result after a small change of the model);
again, SRMP may be more suitable for that. 
\cite{Savchynskyy:UAI12} presented a ``smoothed version of TRW-S'' for pairwise models;
our framework may allow an easy generalization to higher-order graphical models.
We thus hope that our paper will lead to new research directions in the area of approximate MAP-MRF inference.

\appendix

\section{Proof of Proposition~\ref{prop:AMSD}}\label{app:MSDproof}
We can assume w.l.o.g.\ that $\calF=\{\alpha\:|\:(\alpha,\beta)\in \INp{\beta}\}\cup\{\beta\}$
and $J=\IN\beta=\INp{\beta}=\{(\alpha,\beta)\:|\:\alpha\in\calF-\{\beta\}\}$;
removing other factors and edges will not affect the claim.

Let $\theta$ be the output of {\tt AMSD}$(\beta,\IN{\beta},\omega)$.
We need to show that $\Phi(\theta)\ge \Phi(\theta[m])$ for any message vector $m$.

It follows from the description of the AMSD procedure that $\theta$ satisfies the following for any $\bx_\beta$:
\begin{subequations}
\label{eq:prop:AMSD:proof}
\begin{IEEEeqnarray}{rCl}
\hspace{-15pt}
\min_{\bxS_\alpha\sim\bxS_\beta}\theta_\alpha(\bx_\alpha)&=&\omega_{\alpha\beta} \widehat\theta_\beta(\bx_\beta) \qquad \forall (\alpha,\beta)\in J \\
\theta_\beta(\bx_\beta) & = & \omega_\beta \widehat\theta_\beta(\bx_\beta)
\end{IEEEeqnarray}
\end{subequations}
Let $\bx^\ast_\beta$ be a minimizer of $\widehat\theta_\beta(\bx_\beta)$.
From \eqref{eq:prop:AMSD:proof}
we get
\begin{IEEEeqnarray*}{rCl}
\Phi(\theta)&=&\sum_{(\alpha,\beta)\in J}\min_{\bxS_\alpha}\theta_\alpha(\bx_\alpha)
+ \min_{\bxS_\beta}\theta_\beta(\bx_\beta) \\
&=&
\sum_{(\alpha,\beta)\in J} \omega_{\alpha\beta} \widehat\theta_\beta(\bx^\ast_\beta)
+ \omega_\beta\widehat\theta_\beta(\bx^\ast_\beta)
\;=\;\widehat\theta_\beta(\bx^\ast_\beta)
\end{IEEEeqnarray*}
As for the value of $\Phi(\theta[m])$, it is equal to
{\small
\begin{IEEEeqnarray*}{l}
\!\!\!\sum_{(\alpha,\beta)\in J}\!\!\!\min_{\bxS_\alpha}\left[\theta_\alpha(\bx_\alpha)-m(\bx_\beta)\right]
+ \min_{\bxS_\beta}\left[\theta_\beta(\bx_\beta)+\!\!\!\!\sum_{(\alpha,\beta)\in J}\!\!\!m(\bx_\beta)\right] 
 \\
=\!\!\!\!\!\!\sum_{(\alpha,\beta)\in J}\!\!\!\min_{\bxS_\beta}\!\left[\omega_{\alpha\beta}\widehat\theta_\beta(\bx_\beta)\!-\!m(\bx_\beta)\right]
\!\!+\! \min_{\bxS_\beta}\!\!\left[\omega_\beta\widehat\theta_\beta(\bx_\beta)\!+\!\!\!\!\!\sum_{(\alpha,\beta)\in J}\!\!\!\!m(\bx_\beta)\!\right] \\
\le
\!\!\!\sum_{(\alpha,\beta)\in J}\!\!\!\left[\omega_{\alpha\beta}\widehat\theta_\beta(\bx^\ast_\beta)-m(\bx^\ast_\beta)\right]
+ \left[\omega_\beta\widehat\theta_\beta(\bx^\ast_\beta)\!+\!\!\!\!\sum_{(\alpha,\beta)\in J}\!\!\!m(\bx^\ast_\beta)\right] 
 \\
 =\widehat\theta_\beta(\bx^\ast_\beta)
\end{IEEEeqnarray*}
}

%%%%%%%%%%%%%%%%%%%%%%%%%%%%%%%%%%%%%%%%%%%%%%%%%%%%%%%%%%%%%%%%%%%%%%%%%%%%%%%%%%%%%%%%%%%%%%%%%%%%%%%%%%%%%%%%%%%%%

\section{Proof of Proposition~\ref{prop:AMPLP}}\label{app:MPLPproof}
We can assume w.l.o.g.\ that $\calF=\{\beta\:|\:(\alpha,\beta)\in \OUTp{\alpha}\}\cup\{\alpha\}$
and $J=\OUT\alpha=\OUTp{\alpha}=\{(\alpha,\beta)\:|\:\beta\in\calF-\{\alpha\}\}$;
removing other factors and edges will not affect the claim.

Let $\theta$ be the output of {\tt AMPLP}$(\alpha,\OUT{\alpha},\rho)$.
We need to show that $\Phi(\theta)\!\ge\! \Phi(\theta[m])$ for any message vector $m$.

Let $\bx^\ast_\alpha$ be a minimizer of $\widehat\theta_\alpha$,
and correspondingly $\bx^\ast_\beta$ be its restriction to factor $\beta\in \calF$.
\begin{proposition}
(a) $\bx^\ast_\alpha$ is a minimizer of $\theta_\alpha(\bx_\alpha)=\widehat\theta_\alpha(\bx_\alpha)-\sum_{(\alpha,\beta)\in J}\rho_{\alpha\beta}\delta_{\alpha\beta}(\bx_\beta)$. \\
(b) $\bx^\ast_\beta$ is a minimizer of $\theta_\beta(\bx_\beta)=\rho_{\alpha\beta}\delta_{\alpha\beta}(\bx_\beta)$ for each $(\alpha,\beta)\in J$. 
\end{proposition}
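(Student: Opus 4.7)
The plan is to exploit two elementary properties of the min-marginal vector $\delta_{\alpha\beta}$ defined in step 2 of {\tt AMPLP}. First, by definition $\delta_{\alpha\beta}(\bx_\beta)=\min_{\bxS_\alpha\sim\bxS_\beta}\widehat\theta_\alpha(\bx_\alpha)\le\widehat\theta_\alpha(\bx_\alpha)$ for every $\bx_\alpha\sim\bx_\beta$ (the standard lower-bound property). Second, this inequality is tight at the global minimizer: because $\bx^\ast_\alpha$ minimizes $\widehat\theta_\alpha$ globally, we have $\widehat\theta_\alpha(\bx^\ast_\alpha)\le\widehat\theta_\alpha(\bx_\alpha)$ for all $\bx_\alpha\sim\bx^\ast_\beta$, whence $\delta_{\alpha\beta}(\bx^\ast_\beta)=\widehat\theta_\alpha(\bx^\ast_\alpha)$. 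Both facts will be used repeatedly.

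For part~(b), since $\rho_{\alpha\beta}\ge 0$, it suffices to show that $\bx^\ast_\beta$ minimizes $\delta_{\alpha\beta}$ itself. For any $\bx_\beta$, the min-marginal is at least the unconstrained minimum: $\delta_{\alpha\beta}(\bx_\beta)\ge\min_{\bxS_\alpha}\widehat\theta_\alpha(\bx_\alpha)=\widehat\theta_\alpha(\bx^\ast_\alpha)=\delta_{\alpha\beta}(\bx^\ast_\beta)$, using tightness at the star labeling for the last equality. This immediately yields~(b).

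For part~(a), I would apply the inequality $\rho_{\alpha\beta}\delta_{\alpha\beta}(\bx_\beta)\le\rho_{\alpha\beta}\widehat\theta_\alpha(\bx_\alpha)$ and sum over $(\alpha,\beta)\in J$, using that $\sum_{(\alpha,\beta)\in J}\rho_{\alpha\beta}=1-\rho_\alpha$ since $\rho$ is a probability distribution on $\OUT\alpha\cup\{\alpha\}$. This gives the bound
\begin{equation*}
\theta_\alpha(\bx_\alpha)=\widehat\theta_\alpha(\bx_\alpha)-\!\!\!\!\sum_{(\alpha,\beta)\in J}\!\!\rho_{\alpha\beta}\delta_{\alpha\beta}(\bx_\beta)\ge\rho_\alpha\widehat\theta_\alpha(\bx_\alpha)
\end{equation*}
valid for every $\bx_\alpha$. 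At $\bx^\ast_\alpha$, tightness $\delta_{\alpha\beta}(\bx^\ast_\beta)=\widehat\theta_\alpha(\bx^\ast_\alpha)$ turns this into the equality $\theta_\alpha(\bx^\ast_\alpha)=\rho_\alpha\widehat\theta_\alpha(\bx^\ast_\alpha)$. Combining these, and using $\rho_\alpha\ge 0$ together with the global optimality of $\bx^\ast_\alpha$ for $\widehat\theta_\alpha$, I obtain $\theta_\alpha(\bx_\alpha)\ge\rho_\alpha\widehat\theta_\alpha(\bx_\alpha)\ge\rho_\alpha\widehat\theta_\alpha(\bx^\ast_\alpha)=\theta_\alpha(\bx^\ast_\alpha)$, which proves~(a).

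The argument is essentially a bookkeeping exercise; there is no real obstacle. The only subtlety worth flagging is that we cannot assume $\widehat\theta_\alpha\ge 0$, so the bound must be obtained by the convex-combination step above rather than by any sign-based reasoning. Note also that the argument relies only on $\rho$ being a probability distribution with non-negative entries summing to one, so it works uniformly regardless of which individual $\rho_{\alpha\beta}$ vanish.
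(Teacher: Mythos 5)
Your proof is correct and follows essentially the same route as the paper: both arguments rest on the min-marginal bound $\delta_{\alpha\beta}(\bx_\beta)\le\widehat\theta_\alpha(\bx_\alpha)$ with equality at the global minimizer, combined with the fact that $\rho_\alpha+\sum_{(\alpha,\beta)\in J}\rho_{\alpha\beta}=1$; your pointwise inequality $\theta_\alpha(\bx_\alpha)\ge\rho_\alpha\widehat\theta_\alpha(\bx_\alpha)$ is exactly the paper's rewriting of $\theta_\alpha$ as $\rho_\alpha\widehat\theta_\alpha+\sum_{(\alpha,\beta)\in J}\rho_{\alpha\beta}\bigl[\widehat\theta_\alpha-\delta_{\alpha\beta}\bigr]$ with each bracketed term non-negative and vanishing at $\bx^\ast_\alpha$. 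No gaps.
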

\begin{proof}
The fact that $\bx^\ast_\beta$ is a minimizer of $\delta_{\alpha\beta}(\bx_\beta)$
 follows directly the definition of vector $\delta_{\alpha\beta}$ in {\tt AMPLP}$(\alpha,\OUT{\alpha},\rho)$.
To show (a), we write the expression as 
\begin{IEEEeqnarray*}{l}
\widehat\theta_\alpha(\bx_\alpha)-\sum_{(\alpha,\beta)\in J}\rho_{\alpha\beta}\delta_{\alpha\beta}(\bx_\beta) \\
\;\;=\rho_\alpha \widehat\theta_\alpha(\bx_\alpha) + \sum_{(\alpha,\beta)\in J}\rho_{\alpha\beta}\left[\widehat\theta_\alpha(\bx_\alpha)-\delta_{\alpha\beta}(\bx_\beta)\right]
\end{IEEEeqnarray*}
and observe that $\bx^\ast_\alpha$ is a minimizer of each term on the RHS; in particular,
$\min\limits_{\bxS_\alpha}\left[\widehat\theta_\alpha(\bx_\alpha)-\delta_{\alpha\beta}(\bx_\beta)\right]=
\widehat\theta_\alpha(\bx^\ast_\alpha)-\delta_{\alpha\beta}(\bx^\ast_\beta)=0$.
\end{proof}

Using the proposition, we can write
\begin{IEEEeqnarray*}{rCl}
\Phi(\theta)&=& \min_{\bxS_\alpha}\theta_\alpha(\bx_\alpha)
+\sum_{(\alpha,\beta)\in J}\min_{\bxS_\beta}\theta_\beta(\bx_\beta)\\
&=&
\theta_\alpha(\bx^\ast_\alpha)
+\sum_{(\alpha,\beta)\in J} \theta_\beta(\bx^\ast_\beta)
\end{IEEEeqnarray*}
As for the value of $\Phi(\theta[m])$, it is equal to
{\small
\begin{IEEEeqnarray*}{l}
%\Phi(\theta[m])&=&
 \min_{\bxS_\alpha}\left[\theta_\alpha(\bx_\alpha)-\!\!\!\sum_{(\alpha,\beta)\in J}\!\!\!m(\bx_\beta)\right]
\!+\!\!\!\sum_{(\alpha,\beta)\in J}\!\min_{\bxS_\beta}\left[\theta_\beta(\bx_\beta)+m(\bx_\beta)\right]
 \\
\le
\left[\theta_\alpha(\bx^\ast_\alpha)-\sum_{(\alpha,\beta)\in J}m(\bx^\ast_\beta)\right]
+\sum_{(\alpha,\beta)\in J} \left[\theta_\beta(\bx^\ast_\beta)+m(\bx^\ast_\beta)\right]
% = \theta^{\tt old}(\bx^\ast_\alpha)
% \\ &=&
 \\
=
\theta_\alpha(\bx^\ast_\alpha)
+\sum_{(\alpha,\beta)\in J} \theta_\beta(\bx^\ast_\beta)
\end{IEEEeqnarray*}
}

%%%%%%%%%%%%%%%%%%%%%%%%%%%%%%%%%%%%%%%%%%%%%%%%%%%%%%%%%%%%%%%%%%%%%%%%%%%%%%%%%%%%%%%%%%%%%%%%%%%%%%%%%%%%%%%%%%%%%
\section{Implementation of anisotropic MPLP via messages}\label{app:MPLPmessages}
For simplicity we assume that $\OUTp\alpha=\OUT\alpha$ (which is the case in our implementation).

We keep messages $m_{\alpha\beta}$ for edges $(\alpha,\beta)\in J$
that define current reparameterization $\theta=\bar\theta[m]$ via eq.~\eqref{eq:rep}.
To speed up computations, we also store vector $\theta_\beta$ for all factors $\beta\in\calF$
that have at least one incoming edge $(\alpha,\beta)\in J$. 

The implementation of {\tt AMPLP}$(\alpha,\OUT{\alpha},\rho)$ is given below;
all updates should be done for all labelings $\bx_\alpha,\bx_\beta$ with $\bx_\alpha\sim\bx_\beta$.
In step 1 we essentially set messages $m_{\alpha\beta}$ to zero, and update $\theta_\beta$ accordingly.
(We could have set $m_{\alpha\beta}(\bx_\beta):=0$ explicitly, but this would have no effect.)
In step 2 we move $\theta_\beta$ to factor $\alpha$. Again, we could have set $\theta_\beta(\bx_\beta):=0$
after this step, but this would have no effect.

To avoid the accumulation of numerical errors, once in a while we recompute stored values $\theta_\beta$
from current messages $m$.

\begin{algorithm}[!h]
\begin{algorithmic}[1]
\STATE for each $(\alpha,\beta)\in \OUT{\alpha}$ update
$\theta_\beta(\bx_\beta):=\theta_\beta(\bx_\beta)-m_{\alpha\beta}(\bx_\beta)$,
set $\widehat\theta_\beta(\bx_\beta):=\theta_\beta(\bx_\beta)$\vspace{4pt}
\STATE set~ 
$
\theta_\alpha(\bx_\alpha):= \bar\theta_\alpha(\bx_\alpha) + \!\!\!\!\!\mathlarger\sum\limits_{(\gamma,\alpha)\in \IN\alpha} \!\!\!\!m_{\gamma\alpha}(\bx_\alpha) 
+ \!\!\!\!\!\mathlarger\sum\limits_{(\alpha,\beta)\in \OUT\alpha}\!\!\!\! \theta_\beta(\bx_\beta)
$\!\!\!\!\!\!\!\!\!\!
\STATE for each $(\alpha,\beta)\in \OUT{\alpha}$ update  
\begin{IEEEeqnarray*}{rCl}
\theta_\beta(\bx_\beta) &:=& \rho_{\alpha\beta} \min_{\bxS_\alpha\sim\bxS_\beta} \theta_\alpha(\bx_\alpha)
\\
m_{\alpha\beta}(\bx_\beta)&:=&\theta_\beta(\bx_\beta)-\widehat\theta_\beta(\bx_\beta)
\end{IEEEeqnarray*}
\STATE if we store $\theta_\alpha$ for $\alpha$ (i.e.\ if exists $(\gamma,\alpha)\in J$) then update~
$
\theta_\alpha(\bx_\alpha):=\theta_\alpha(\bx_\alpha)-\mathlarger\sum\limits_{(\alpha,\beta)\in \OUT\alpha} \theta_\beta(\bx_\beta)
$
\end{algorithmic}
\end{algorithm}

%\section{Alternative implementation of SRMP for pairwise models}\label{app:pairwiseSRMP}
\section{Equivalence to TRW-S}\label{app:pairwiseSRMP}
Consider a pairwise model. Experimentally we verified that SRMP
is equivalent to the TRW-S algorithm~\cite{Kolmogorov:TRWS:PAMI06},
assuming that all chains in~\cite{Kolmogorov:TRWS:PAMI06} are assigned the uniform probability
and the weights in SRMP are set via eq.~\eqref{eq:weightsTRWS} and \eqref{eq:weightsTRWS:bw}.
More precisely, the lower bounds produced by the two were identical up to the last digit (eventhough the messages were different).
In this section we describe how this equivalence could be established.

As discussed in section~\ref{sec:SRMP}, the second alternative to implement SRMP is to keep messages $\widehat m$ that define the current reparameterization $\theta$
via the following two-stage procedure: %(see eq.~\eqref{eq:reptwo}). 
(i) compute $\widehat\theta=\bar\theta[\widehat m]$ via eq.~\eqref{eq:rep};
(ii) compute
%\begin{subequations}\label{eq:reptwo}
\begin{eqnarray*}
\theta_\alpha(\bx_\alpha)&\!\!\!\!\!=\!\!\!\!\!&\widehat\theta_\alpha(\bx_\alpha)+\!\!\!\!\!\sum_{(\alpha,\beta)\in \OUT{\alpha}}\!\!\!\!\!\omega_{\alpha\beta}\widehat\theta_\beta(\bx_\beta)\;\;\quad\forall \alpha\in\calF, |\alpha|=2 \qquad\label{eq:reptwo:a'}\\
\theta_\beta(\bx_\beta)&\!\!\!\!\!=\!\!\!\!\!&\omega_{\beta}\widehat\theta_\beta(\bx_\beta)\hspace{68pt}\;\;\quad\forall \beta\in\calF,|\beta|=1\qquad\label{eq:reptwo:b'}
\end{eqnarray*}
%\end{subequations}
where $\omega_{\alpha\beta},\omega_\beta$ are the weights used in the last update for $\beta$.
We also store vectors $\widehat\theta_\beta$ for singleton factors $\beta=\{i\}\in\calS$
and update them as needed. 

%Procedure {\tt AMSD}$(\beta,\IN{\beta},\omegafw)$ during the forward pass
%is then equivalent to the following:
Consider the following update for factor $\beta$:

%\pagebreak
%~

\begin{algorithm}[!h]
\begin{algorithmic}[1]
\STATE for each $\alpha$ with $(\alpha,\beta)\in \IN{\beta}$ update $\widehat m_{\alpha\beta}(\bx_\beta) :=$
\begin{equation*}
%m_{\alpha\beta}(\bx_\beta) :=
\hspace{-8pt} \min_{\bxS_\alpha} \left\{\bar\theta_\alpha(\bx_\alpha)+\!\!\!\sum_{(\alpha,\gamma)\in \OUT\alpha,\gamma\ne\beta }\!
\left[\omega_{\alpha\gamma}\widehat\theta_\gamma(\bx_\gamma)-\widehat m_{\alpha\gamma}(\bx_\gamma)\right]\right\}
\end{equation*}
\STATE update
$\widehat\theta_\beta(\bx_\beta):=\bar\theta_\beta(\bx_\beta)+\sum_{(\alpha,\beta)\in \IN\beta}\widehat m_{\alpha\beta}(\bx_\beta)$
\end{algorithmic}
\end{algorithm}

We claim that this corresponds to procedure {\tt AMSD}$(\beta,\IN{\beta},\omega)$; a proof of this is left to the reader.
This implies that  Algorithm~\ref{alg:SRMP:messages} is equivalent to repeating the following:
(i) run the updates above for factors $\beta\in\calS$ in order $\preceq$ using weights $\omega^+$,
but skip the updates of messages $\widehat m_{\alpha\beta}$ for $(\alpha,\beta)\in \IN{\beta} - \INfw{\beta}$;
(ii) reverse the ordering, 
swap $\INfw{\beta}\!\leftrightarrow\! \INbw{\beta},\omegafw_{\alpha\beta}\!\leftrightarrow\! \omegabw_{\alpha\beta}$. %~~ $\forall\beta$
\footnote{Skipping updates for $(\alpha,\beta)\in \IN{\beta} - \INfw{\beta}$ is justified as
in Proposition~\ref{prop:SRMPfwbw}. Note, the very first forward pass of the described procedure
is not equivalent to AMSD updates, but is equivalent to the updates in Algorithm~\ref{alg:SRMP:messages};
verifying this claim is again left to the reader.}

It can now be seen that the updates given above are equivalent to those in~\cite{Kolmogorov:TRWS:PAMI06}.
Note, the order of operations is slightly different: in one step of the forward pass we send messages from node $i$ to higher-ordered 
nodes, while in \cite{Kolmogorov:TRWS:PAMI06} messages are sent from lower-ordered nodes to $i$.
However, is can be checked that the result in both cases is the same.

  \section*{Acknowledgments}

I thank Thomas Schoenemann for his help with the experimental section and providing some of the data.
This work was in parts funded by the European Research
Council under the European Unions Seventh Framework Programme (FP7/2007-2013)/ERC grant agreement no 616160.

\bibliography{TRWS}
\bibliographystyle{IEEEtran}

% that's all folks
\end{document}